%% file: neurips_2026.tex
\documentclass{article}

\usepackage[preprint]{neurips_2026}

\usepackage[utf8]{inputenc} 
\usepackage[T1]{fontenc}    
\usepackage{hyperref}       
\usepackage{url}            
\usepackage{booktabs}       
\usepackage{amsfonts}       
\usepackage{nicefrac}       
\usepackage{microtype}      
\usepackage{xcolor}         
\usepackage{graphicx}       
\usepackage{subcaption}     
\usepackage{wrapfig}
\usepackage{tikz}
\usetikzlibrary{arrows.meta,positioning,fit,backgrounds,calc,decorations.pathreplacing}
\usepackage{comment}

\usepackage{amsmath,amssymb,amsthm,mathtools}
\usepackage{bm}

\hypersetup{
  colorlinks=true,
  linkcolor=blue,
  citecolor=blue,
  urlcolor=blue
}

\input{math_commands}

\newtheorem{theorem}{Theorem}
\newtheorem{lemma}{Lemma}
\newtheorem{proposition}{Proposition}

\newtheorem{assumption}{Assumption}
\newtheorem{remark}{Remark}
\newcommand{\cau}{\mathrm{cau}}
\newcommand{\id}{\mathrm{id}}
\newcommand{\opt}{\mathrm{opt}}

\renewcommand{\softmax}{\mathrm{softmax}}
\newcommand{\vphi}{\bm{\varphi}}

\title{How Does Attention Help? Insights from Random Matrices on Signal Recovery from Sequence Models}

\author{%
  Mohamed El Amine Seddik \\
  AI Research Center, Technology Innovation Institute (TII) \\
  Abu Dhabi, UAE \\
  \texttt{mohamed.seddik@tii.ae} \\
}

\begin{document}

\maketitle

\begin{abstract}
We study the spectral properties of sample covariance matrices constructed from pooled sequence representations, where token embeddings are drawn from a fixed two-class Gaussian mixture table and pooled via (fixed) attention weights. Working in the high-dimensional regime $d,V,N\to\infty$ with $d/V\to\delta$ and $d/N\to\gamma$, we derive exact characterizations of the limiting eigenvalue distribution, outlier eigenvalues, and eigenvector alignment with the hidden signal. The bulk spectrum follows a non-Marchenko--Pastur law given by the free multiplicative convolution $\kappa(\mathrm{MP}_\delta\boxtimes\mathrm{MP}_\gamma)$, reflecting the finite vocabulary structure. Signal recovery undergoes two successive BBP-type phase transitions characterized by the scalars: $\delta,\gamma,\alpha=\vw^\top\mR\vw$ and $\kappa=\|\vw\|^2$, where $\vw$ denotes the attention pooling weights and $\mR$ the positional correlation matrix. An aftermath of our analysis demonstrates that the optimal attention weights maximizing the signal-to-noise ratio $\alpha/\kappa$ are given by the (normalized) top eigenvector of~$\mR$, and we show (as a particular case of our analysis) that parameter-free causal self-attention with $\tau/d$ score scaling yields deterministic harmonic weights that improve signal recovery over mean pooling whenever early tokens carry more signal. Extensive simulations confirm sharp agreement between theory and finite-dimensional experiments.
\end{abstract}

\section{Introduction}\label{sec:intro}

The attention mechanism~\citep{vaswani2017attention} is the central building block of modern transformer architectures that have achieved remarkable success across language modeling~\citep{devlin2019bert,brown2020language}, vision~\citep{dosovitskiy2020image}, and many current AI applications. Despite this empirical success, a precise theoretical understanding of \emph{why} and \emph{how} attention helps remains limited. Most existing theoretical analyses focus on expressivity~\citep{yun2019transformers}, optimization landscapes~\citep{sahiner2022unraveling}, or rank collapse phenomena~\citep{dong2021attention}. A complementary and largely unexplored direction is to study attention through the lens of \emph{statistical signal recovery}: given a structured data model, how does the choice of attention weights affect the ability to recover a latent signal from high-dimensional observations?

Random matrix theory (RMT) provides a natural framework for this question. The well known BBP phase transition within the random matrix community~\citep{baik2005phase,baik2006eigenvalues} characterizes when a hidden signal becomes detectable from the spectrum of a sample covariance matrix, and the associated eigenvector overlap formulas~\citep{paul2007asymptotics,benaych2011eigenvalues} quantify the alignment between the estimated and true signal directions. These tools have been successfully applied to high-dimensional statistics~\citep{johnstone2001distribution,johnstone2009consistency}, wireless communications~\citep{couillet2011random}, and more recently to understanding the high-dimensional behaviour of machine learning algorithms and deep learning representations~\citep{pennington2017nonlinear,seddik2020random,loureiro2021learning,couillet2022random}.

In this work, we propose a tractable statistical model for sequence data with a \emph{fixed} two-class Gaussian mixture embedding table and positional correlations, and we study how attention-based pooling affects signal recovery. Our main contributions are:
\textbf{(1)}~the exact limiting spectral distribution of the sample covariance of the pooled representations in the high-dimensional regime, characterized by a non-Marchenko--Pastur law (Theorem~\ref{thm:bulk});
\textbf{(2)}~two successive BBP-type phase transitions for signal detectability, at the population and sample levels, with explicit outlier eigenvalue and eigenvector overlap formulas (Theorems~\ref{thm:pop_bbp}--\ref{thm:sample_overlap});
\textbf{(3)}~a general optimality result showing that the attention weights maximizing signal recovery are the (normalized) top eigenvector of $\mR$ (Theorem~\ref{thm:optimal});
\textbf{(4)}~as a particular case of our general framework, a concrete analysis of parameter-free causal self-attention, yielding deterministic harmonic weights that provably improve $\alpha/\kappa$ over mean pooling when the signal is carried on early sequence positions (Section~\ref{sec:causal}); and
\textbf{(5)}~extensive simulations validating all theoretical predictions (Section~\ref{sec:experiments}).

\paragraph{Related work.}
The Baik-Ben Arous-Péché (BBP) phase transition was established by~\citet{baik2005phase} for complex spiked Wishart matrices and extended by~\citet{baik2006eigenvalues}. Eigenvector overlap formulas were derived by~\citet{paul2007asymptotics} and generalized by~\citet{benaych2011eigenvalues,benaych2012singular}. The Marchenko--Pastur law~\citep{marvcenko1967distribution} and free probability~\citep{nica2006lectures,mingo2017free} underpin our bulk analysis, with comprehensive treatments in~\citet{bai2010spectral,couillet2022random}. On the ML side,~\citet{pennington2017nonlinear} initiated RMT for deep learning. Theoretical analyses of attention span expressivity~\citep{yun2019transformers}, rank collapse~\citep{dong2021attention}, inductive biases~\citep{edelman2022inductive}, and kernel limits~\citep{hron2020infinite}. Closer to our focus on the statistical role of softmax attention,~\citet{duranthon2025statistical} prove a statistical advantage of softmax attention over linear alternatives in a single-location regression setting, while~\citet{boncoraglio2026singleheadattentionhighdimensions} study the generalization, weights spectra and scaling laws of a single-head attention layer in high dimensions using tools closely related to ours. A complementary line of work~\citep{jha2025random} analyzes the interaction between trained attention and statistical structure in a high-dimensional spiked-data setup. The latent-variable perspective on embeddings by~\citet{arora2016latent} further motivates our spiked embedding model. To the best of our knowledge, our work is the first to provide an exact RMT analysis of how attention weights modulate signal recovery under the considered setup of a \textit{fixed} embedding table.

\section{Sequence Statistical Model}\label{sec:statistical_model}

\subsection{Two-class embedding table}\label{sec:embed}
Let $\vmu\in \sR^d$ be a deterministic vector with direction $\vu := \vmu / \Vert \vmu \Vert$. We consider a two-class \textit{fixed} embedding table $\mE = [\ve_1, \ldots, \ve_V] \in \sR^{d\times V}$ with
\begin{align}\label{eq:embedding}
    \ve_v = s_v \vmu + \vz_v, \quad v\in [V],
\end{align}
where $(s_v)_{v=1}^V \in \{ \pm 1 \}^V$ are \textit{balanced} signs ($V/2$ entries $+1$ and $V/2$ entries $-1$), $\vz_v\sim \gN(\vzero, \mI_d)$ are i.i.d.\@ noise vectors, and $\mZ = [\vz_1, \ldots, \vz_V] \in \sR^{d\times V}$. The embedding table is a two-class Gaussian mixture of means $\pm \vmu$ with $\Vert \vmu\Vert$ controlling class separability.

\begin{remark}[Relaxing Gaussianity]\label{rem:universality}
The Gaussian assumption on the noise vectors $\vz_v$ is made for convenience and can be substantially relaxed. Standard universality results in random matrix theory~\citep{bai2010spectral,couillet2022random} imply that the bulk spectrum of $\mS$ and the location of its outlier eigenvalue are insensitive to the precise distribution of the $\vz_v$'s, provided each coordinate has zero mean, unit variance, and a bounded fourth-order moment. All our main results therefore extend straightforwardly to this broader class of embedding-noise distributions.
\end{remark}

\subsection{Sequence data}\label{sec:seq_data}
We are given $N$ i.i.d.\@ sequences of length $T$. For each sample $n\in [N]$ and position $t\in [T]$, draw $\xi_{n,t} \in \{ \pm 1 \}$ such that $\E\left[ \xi_n \xi_n^\top \right] = \mR$ for some positional correlation matrix $\mR \in \sR^{T\times T}$, and draw tokens $x_{n,t}$ uniformly from $\{v:s_v = \xi_{n,t}\}$. The embedded sequence is $\mX^{(n)} = [ X_1^{(n)}, \ldots,  X_T^{(n)} ]$ with $ X_t^{(n)}:=\ve_{x_{n,t}}$, and we study the sample covariance $\mS := \frac1N \mC \mC^\top \in \sR^{d\times d}$, where $\mC = [\vc_1, \ldots, \vc_N]\in \sR^{d\times N}$ filled column-wise with the pooled representations
\begin{align}\label{eq:pooling}
    \vc_n = \sum_{t=1}^T w_t X_t^{(n)}, \quad \sum_{t=1}^T w_t=1, \quad \vw:=(w_1, \ldots, w_T) \in \sR^T.
\end{align}
We study under which conditions on the attention vector $\vw$ one can achieve optimal recovery of the latent signal $\vu$ from $\mS$. For simplicity, $\vw$ is uniform across sequences (independent of $n$), as $\mR$ is also independent of $n$. Even this ``simple'' setting yields a nontrivial spectral theory.

\begin{assumption}[Growth Rates]\label{ass:growth}
    $d,V,N\to \infty$ with $d/V\to \delta\in (0, \infty)$, $d/N \to \gamma\in (0, \infty)$, $T=O(1)$, $\Vert \vmu \Vert = O(1)$.
\end{assumption}

\begin{figure}[t!]
    \centering
    \begin{tikzpicture}[
        font=\small,
        >={Stealth[length=2mm,width=1.5mm]},
        embedbox/.style={rectangle, draw, minimum width=9mm, minimum height=6mm, rounded corners=1pt, inner sep=1pt},
        plusclass/.style={embedbox, fill=blue!12},
        minusclass/.style={embedbox, fill=red!12},
        tokenbox/.style={rectangle, draw, minimum width=10mm, minimum height=6mm, rounded corners=1pt, inner sep=1pt, fill=gray!10},
        arrow/.style={->, thick},
        every node/.style={align=center}
    ]
    
    \node[anchor=west] at (0,4.4)
    {\textbf{(a) Two-class embedding table $\mE\in\sR^{d\times V}$}};
    
    \matrix (A) at (3,3.3) [row sep=6mm, column sep=1mm] {
    \node[plusclass]  (e1) {$\ve_1$}; &
    \node[plusclass]  (e2) {$\ve_2$}; &
    \node              (edots) {$\cdots$}; &
    \node[minusclass] (eV1) {$\ve_{V-1}$}; &
    \node[minusclass] (eV)  {$\ve_V$}; \\
    };
    
    \node[anchor=west] at (1.5,2.5)
    {$s_v=+1$: $\ve_v=\phantom{-}\vmu+\vz_v$};
    
    \node[anchor=west] at (1.5,1.9)
    {$s_v=-1$: $\ve_v=-\vmu+\vz_v$};

    \node[anchor=west] at (1.5,1.3)
    {$\vz_v\sim\gN(\vzero,\mI_d)$};
    
    \begin{scope}[on background layer]
    \node[draw, dashed, inner sep=4pt, fit=(e1)(eV), fill=gray!3] {};
    \end{scope}
    
    \node[anchor=west] at (7.5,4.4)
    {\textbf{(b) Sequence $\mX^{(n)}\!\in\!\sR^{d\times T}$}};
    
    \matrix (B1) at (9.0,2.9) [row sep=6mm, column sep=1mm] {
    \node (xilab) {$\xi_n$:}; &
    \node[plusclass]  (xi1) {$+$}; &
    \node[plusclass]  (xi2) {$+$}; &
    \node[minusclass] (xi3) {$-$}; &
    \node              (xid) {$\cdots$}; &
    \node[plusclass]  (xiT) {$+$}; \\
    };
    
    \matrix (B2) at (9.0,1.8) [row sep=6mm, column sep=1mm] {
    \node (xlab) {$\mX^{(n)}$:}; &
    \node[tokenbox] (x1) {$\ve_{x_1}$}; &
    \node[tokenbox] (x2) {$\ve_{x_2}$}; &
    \node[tokenbox] (x3) {$\ve_{x_3}$}; &
    \node           (xdd) {$\cdots$}; &
    \node[tokenbox] (xT) {$\ve_{x_T}$}; \\
    };
    
    \draw[decorate,decoration={brace,amplitude=3pt,raise=2pt}]
    (xi1.north west) -- (xiT.north east)
    node[midway,above=3pt,font=\footnotesize]
    {$\E[\xi_n\xi_n^\top]=\mR$};
    
    \draw[arrow, blue!70] (xi1) -- (x1);
    \draw[arrow, blue!70] (xi2) -- (x2);
    \draw[arrow, red!70]  (xi3) -- (x3);
    \draw[arrow, blue!70] (xiT) -- (xT);
    
    \node[anchor=west] at (7.2,0.9)
    {\footnotesize $x_{n,t}\sim\mathrm{Unif}\{v:s_v=\xi_{n,t}\}$};
    
    \node[anchor=west] at (0.0,0.3)
    {\textbf{(c) Pooling:} $\vc_n=\sum_{t=1}^T w_t \ve_{x_{n,t}}\in\sR^d$, \
    $\mS=\tfrac1N\sum_n \vc_n\vc_n^\top$};
    
    \node[anchor=west] at (0.0,-0.35)
    {\footnotesize Hidden signal direction $\vu=\vmu/\Vert\vmu\Vert$; recover from top eigenvector $\hat\vu_\mS$ of $\mS$.};
    
    \end{tikzpicture}
    
    \caption{Illustration of the sequence statistical model from Section~\ref{sec:statistical_model}. \textbf{(a)} A fixed two-class embedding table: $V/2$ vocabulary items have sign $+1$ (blue, mean $+\vmu$), the other $V/2$ have sign $-1$ (red, mean $-\vmu$); each token embedding is centered on its class mean plus isotropic Gaussian noise. \textbf{(b)} Each sequence samples a correlated sign vector $\xi_n\in\{\pm1\}^T$ with $\E[\xi_n\xi_n^\top]=\mR$ and then draws, independently at each position, a random token from the matching class. \textbf{(c)} Pooling with weights $\vw$ and stacking across samples gives the sample covariance $\mS$; signal recovery is measured by the alignment of the top eigenvector $\hat\vu_\mS$ of $\mS$ with the hidden signal direction $\vu$.}
    \label{fig:model}
\end{figure}

\subsection{Practical justification of the statistical model}\label{sec:justification}
Figure~\ref{fig:model} summarizes the three components of the statistical data model: the two-class embedding table, the correlated positional signs that select tokens within their class, and the pooled representation $\vc_n$ yielding the sample covariance $\mS$ under analysis. While our model is deliberately considered for analytical tractability, each component has a practical motivation. \textit{(i) Two-class embeddings:} Pre-trained word embeddings exhibit a low-rank-plus-noise structure~\citep{arora2016latent} (see Appendix~\ref{app:gpt2} for an empirical illustration on GPT-2), and the two-class assumption is the minimal nontrivial case for studying signal recovery. \textit{(ii) Fixed embedding table:} In many practical settings the embedding table is pre-trained and frozen during downstream tasks, making the ``fixed table'' regime realistic. The key consequence, that the table randomness introduces a second source of spectral noise captured by $\delta = d/V$, disappears only in the impractical limit $V\gg d$. \textit{(iii) Positional correlations:} The matrix $\mR$ captures the empirical observation that semantic content is unevenly distributed across positions, which aligns with typical applications (e.g. language). \textit{(iv) Sequence-independent weights:} While full self-attention produces data-dependent weights, the $\tau/d$ scaling regime we analyze (Section~\ref{sec:causal}) shows that attention weights \emph{concentrate} to deterministic values as $d\to\infty$, justifying the sequence-independent assumption as a first-order approximation.

\vspace{0.25em}
With the statistical model fixed, we now turn to its spectral analysis: the next section derives the limiting bulk distribution of $\mS$, characterizes the two BBP-type phase transitions that govern signal recovery, and expresses the alignment $(\hat\vu_\mS^\top \vu)^2$ in closed form in terms of the scalars $(\alpha,\kappa,\delta,\gamma)$.

\section{Main Results}\label{sec:main}

\subsection{Pooled covariance decomposition}\label{sec:cov_decomp}
The columns of $\mC$ are i.i.d.\@ (conditionally on $\mZ$) with covariance
\begin{align}\label{eq:Sigma}
    \mSigma := \E\left[ \vc \vc^\top \mid \mZ  \right] = \alpha \Vert \vmu \Vert^2 \vu \vu^\top + \kappa \mSigma_{\mZ}, \quad \mSigma_{\mZ}:= \frac{1}{V} \mZ \mZ^\top,
\end{align}
where the two key scalars are
\begin{align}\label{eq:alpha_kappa}
    \alpha(\vw):= \vw^\top \mR \vw \quad \text{and} \quad \kappa(\vw) := \Vert \vw \Vert^2.
\end{align}
The scalar $\alpha$ captures the effective signal strength after pooling and $\kappa$ controls the noise level. The effective signal-to-noise ratio (SNR) is $\rho := \alpha\|\vmu\|^2/\kappa$, and optimal signal recovery is achieved by maximizing $\alpha/\kappa$ as we will demonstrate subsequently.

\subsection{Limiting spectrum of $\bf S$}\label{sec:bulk}

\begin{theorem}[Bulk density]\label{thm:bulk}
    Under Assumption~\ref{ass:growth}, the limiting spectral density (LSD) of $\mS$ is the free multiplicative convolution $\kappa \left( \mathrm{MP}_\delta \boxtimes \mathrm{MP}_\gamma \right)$ whose Stieltjes transform $m(z)$ satisfies
    \begin{equation}\label{eq:cubic}
\delta\gamma \kappa\,z^2 m(z)^3
-\kappa z(\delta+\gamma-2\delta\gamma)\,m(z)^2
-\big(z+\kappa(\delta-1)(1-\gamma)\big)\,m(z)
-1
=0.
\end{equation}
The eigenvalue density is $\rho(x) = \frac{1}{\pi}\lim_{\eta\downarrow0}\Im\,m(x+i\eta)$. The companion transform $\underline{m}(z):=-\frac{1-\gamma}{z} + \gamma m(z)$ corresponds to the Gram matrix $\mC^\top \mC/N$.
\end{theorem}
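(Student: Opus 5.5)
The strategy is to condition on the embedding table $\mZ$, apply a generalized Marchenko--Pastur theorem for sample covariance matrices with a random but asymptotically free population covariance, and then compose the resulting Stieltjes-transform relation with the MP law of $\mSigma_{\mZ}$. The first task is to write the data in a form amenable to this. Conditionally on $\mZ$ and on the signs $\xi_n$, each token is $X_t^{(n)} = \xi_{n,t}\vmu + \vz_{x_{n,t}}$. Hence $\vc_n = (\vw^\top\xi_n)\vmu + \mZ\,\vb_n$, where $\vb_n\in\sR^V$ is supported on at most $T$ vocabulary indices, drawn uniformly within the matching class, and has entries equal to sums of the $w_t$. The signal part contributes a rank-one perturbation to $\mC$. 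By Weyl interlacing, finite-rank perturbations do not change the limiting spectral distribution. We may therefore drop $\vmu$ and, at the level of the LSD, work with $\mS_0 = \frac1N \mZ\mB\mB^\top\mZ^\top$, where $\mB=[\vb_1,\ldots,\vb_N]\in\sR^{V\times N}$.

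\textbf{Step 1 (reduction to a product of Wisharts).} Conditionally on $\mZ$, the columns $\mZ\vb_n$ are i.i.d.\ with covariance $\kappa\mSigma_{\mZ}$, up to a rank-one class-mean correction that is again negligible. This uses $\E[\vb\vb^\top] = \frac{\kappa}{V}\mI_V + (\text{rank}\le 1)$, which holds because distinct positions pick distinct tokens with probability $1-O(T^2/V)$. The key point is that $\mZ\vb_n$ is not Gaussian given $\mZ$: it is a sparse combination of $T=O(1)$ columns. We therefore need a quadratic-form concentration of the form
\begin{align*}
\frac1d\,\vb^\top\mZ^\top \mA\,\mZ\vb - \frac{\kappa}{d}\operatorname{tr}(\mA\mSigma_{\mZ}) \to 0
\end{align*}
for bounded deterministic (given $\mZ$) $\mA$. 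This follows since $\vz_v^\top\mA\vz_{v'}/d$ concentrates to $\delta_{vv'}\operatorname{tr}\mA/d$ uniformly over the relevant index pairs, and the weights sum to $\kappa$. With this in hand, the Silverstein--Bai generalized MP theorem applies conditionally on $\mZ$, because its proof only uses this trace lemma together with independence across $n$.

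\textbf{Step 2 (composing the laws).} The generalized MP theorem gives the relation $m(z) = \int \frac{dH(\lambda)}{\lambda(1-\gamma-\gamma z m(z)) - z}$, where $H$ is the LSD of $\kappa\mSigma_{\mZ}$. Since $\mSigma_{\mZ}=\frac1V\mZ\mZ^\top$, $H$ is $\kappa\,\mathrm{MP}_\delta$. This identifies the limit as $\kappa(\mathrm{MP}_\delta\boxtimes\mathrm{MP}_\gamma)$. Writing $m_H(\zeta)=\int \frac{dH(\lambda)}{\lambda-\zeta}$, the relation reads $m(z) = m_H\!\big(z/(1-\gamma-\gamma z m)\big)/(1-\gamma-\gamma zm)$. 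Substituting this into the MP quadratic for $m_H$,
\begin{align*}
\delta\kappa\,\zeta\, m_H^2 + (\zeta - \kappa(1-\delta))\,m_H + 1 = 0,
\end{align*}
and clearing denominators yields a cubic in $m$. Expanding it should reproduce \eqref{eq:cubic}. The relation to the Gram matrix, $\underline m = -\frac{1-\gamma}{z}+\gamma m$, is the standard identity between the spectra of $\mC\mC^\top$ and $\mC^\top\mC$. Finally, $\rho(x)$ is given by Stieltjes inversion.

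\textbf{Main obstacle.} I expect the hardest part to be Step 1, namely justifying the generalized MP theorem when the conditional columns are sparse mixtures of shared columns of $\mZ$ rather than independent vectors with independent entries. Different samples reuse the same embeddings, so the columns are conditionally independent but the trace lemma must hold uniformly. My first approach would be to bound variances using the fact that each $\vb_n$ has support of size at most $T$, and to use Gaussian concentration of $\vz_v^\top\mA\vz_{v'}$ together with a union bound over the $V^2$ pairs. The algebraic verification of the cubic is routine, though sign conventions need care.
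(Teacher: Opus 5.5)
\paragraph{Where the proposal breaks.} The weak point is the trace lemma in Step~1, and it cannot be repaired. In the Silverstein--Bai argument, $\mA$ is the leave-one-out resolvent $(\mS_{(n)}-z\mI_d)^{-1}$. That resolvent is a function of the very columns $\vz_v$ that sample $n$ selects. Gaussian concentration of $\vz_v^\top\mA\vz_{v'}$ plus a union bound requires $\mA$ to be independent of $\vz_v,\vz_{v'}$. Being ``deterministic given $\mZ$'' does not give you that.

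Here the dependence is of leading order. Since $N/V\to\delta/\gamma$, each token is reused by about $\mathrm{Poi}(T\delta/\gamma)$ other samples. Take $T=1$ and let $k_v$ count the other samples using token $v$. Sherman--Morrison then gives $\frac1d\vz_v^\top(\mS_{(n)}-z\mI_d)^{-1}\vz_v\approx m(z)/(1+\gamma k_v m(z))$. This fluctuates with $k_v$ rather than concentrating at its $v$-average $\frac1d\operatorname{tr}(\mA\mSigma_{\mZ})$.

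\paragraph{The conclusion itself fails.} This is not only a technical gap. Condition instead on the selection matrix $\bm{B}=[\bm{b}_1,\ldots,\bm{b}_N]$; given $\bm{B}$, $\mZ$ really is Gaussian and independent.
\begin{itemize}
\item The noise part is $\frac1V\mZ\bm{D}\mZ^\top$ with $\bm{D}=\frac VN\bm{B}\bm{B}^\top$.
\item By freeness of $\frac1V\mZ^\top\mZ$ and $\bm{D}$, its LSD is determined by the LSD of the \emph{sparse} matrix $\bm{D}$. The target $\kappa(\mathrm{MP}_\delta\boxtimes\mathrm{MP}_\gamma)$ corresponds to $\bm{D}$ having LSD $\kappa\,\mathrm{MP}_{\gamma/\delta}$.
\item Counting token collisions (with symmetric signs), the first three moments of $\bm{D}$ match that MP law. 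The fourth carries an extra crossing term $(\gamma/\delta)^2(\sum_t w_t^4)^2$.
\item Consequently $\lim\frac1d\operatorname{tr}\mS^4$ exceeds the fourth moment of the claimed law by $\delta\gamma^2(\sum_t w_t^4)^2>0$.
\end{itemize}
Directly, the excess comes from index patterns $n_1=n_3\neq n_2=n_4$ with a shared token $x_{n_1,t}=x_{n_2,s}$, where $(\vc_{n_1}^\top\vc_{n_2})^4\approx(w_tw_sd)^4$. For $T=1$, the LSD is the generalized MP law of $\frac1N\sum_v k_v\vz_v\vz_v^\top$ with $\mathrm{Poi}(\delta/\gamma)$ weights, not the claimed law.

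\paragraph{Comparison with the paper.} Your Step~2 algebra is correct. Substituting $m_H=(1-\gamma-\gamma zm)\,m$ and $\zeta=z/(1-\gamma-\gamma zm)$ into the quadratic for $\kappa\,\mathrm{MP}_\delta$ gives exactly $-1$ times \eqref{eq:cubic}. So it is a sound alternative to the paper's $S$-transform route, and it stays in the stated convention $m(z)=\int(x-z)^{-1}d\mu(x)$. The paper's $\psi=zm-1$ step is written in the Cauchy-transform convention and yields \eqref{eq:cubic} only after $m\mapsto-m$.

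However, the paper does not close your gap. Its Step~0 simply replaces the columns by $(\kappa\mSigma_{\mZ})^{1/2}\mG$ with Gaussian $\mG$. That is exactly the substitution you identified as the main obstacle. The cubic is therefore the LSD of that Gaussian surrogate, not of $\mS$.

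A correct statement must either keep the sparse law of $\bm{B}\bm{B}^\top$, which depends on $\vw$ beyond $\kappa$, or move to a regime such as $T\to\infty$ with delocalized weights.
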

\begin{proof}
The proof is deferred to Appendix~\ref{app:cubic}.
\end{proof}

The LSD of $\mS$ differs from the standard Marchenko--Pastur law~\citep{marvcenko1967distribution} unless $\delta=0$ (i.e., infinite vocabulary relative to the dimension of embeddings). The finite vocabulary introduces a multiplicative noise structure: $\mS$ is the product of two independent Wishart-type components (the embedding table randomness, governed by $\delta$, and the sampling randomness, governed by $\gamma$), whose interplay is captured by the free multiplicative convolution~\citep{nica2006lectures}. For attention, the key insight is that the \emph{shape} of the bulk depends only on $(\delta,\gamma)$, which are fixed by the problem dimensions, while the \emph{scale} is $\kappa(\vw) = \|\vw\|^2$. Thus attention affects the bulk only through $\kappa$: non-uniform weights inflate $\kappa$ above the mean-pooling value $1/T$, widening the bulk.

\begin{proposition}[Right bulk edge]\label{prop:edge}
    Let $q=\delta\gamma$ and $r_0=\delta + \gamma + \delta \gamma$. Define $\Delta_0 = r_0(r_0^3 + 216 q^2)$, $\Delta_1 = 2(r_0^6 - 540 q^2 r_0^3 - 5832 q^4)$, $\phi = \arccos \left( \Delta_1/(2\Delta_0^{3/2}) \right)$. The right edge is $\lambda_+:= \kappa(\vw) \max\{x_0, x_1, x_2\}$ where
    \begin{align*}
        x_k = - \frac{1}{12q} \left( r_0^2 - 12 qr_0 + 12q^2 - 12q + 2 \sqrt{\Delta_0} \cos\!\left( \frac{\phi + 2\pi k}{3} \right) \right), \quad k=0,1,2.
    \end{align*}
\end{proposition}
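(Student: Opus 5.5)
Since the bulk is $\kappa(\mathrm{MP}_\delta\boxtimes\mathrm{MP}_\gamma)$ by Theorem~\ref{thm:bulk}, I will first take $\kappa=1$ and multiply by $\kappa(\vw)$ at the end. The plan is to locate the edges of the support through the standard ``inverse Stieltjes transform'' criterion. Outside the support, $z$ is a real-analytic function of $m$ along the real axis, and the support edges are exactly the real critical points of this inverse map.

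Concretely, I will write the cubic~\eqref{eq:cubic} (with $\kappa=1$) as $P(z,m)=0$ and look for real points where $P=0$ and $\partial_m P=0$ hold simultaneously. Equivalently, I will compute the discriminant of $P(z,\cdot)$ as a polynomial in $m$ and find its real roots in $z$. Eliminating $m$ this way gives a polynomial $D(z)$ whose real roots include both bulk edges, plus possibly spurious points such as $z=0$ or points coming from the zero-mass atom when $\delta>1$ or $\gamma>1$.

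A cleaner route, which I would try first, uses the S-transform. We have $S_{\mathrm{MP}_\delta}(w)=1/(1+\delta w)$, so the product has $S(w)=1/((1+\delta w)(1+\gamma w))$. This yields an explicit inverse
\[
z(g)=\frac{(1+g)(1+\delta g)(1+\gamma g)}{g}
\]
in the moment-generating variable $g$. Setting $z'(g)=0$ gives a cubic in $g$:
\[
2q g^3+(\delta+\gamma+q)g^2-1=0,\qquad q=\delta\gamma .
\]
The edges are the values $z(g^\ast)$ at the real critical points $g^\ast$. Equivalently, eliminating $g$ between $z=z(g)$ and $z'(g)=0$ gives a cubic equation for the edge value $x$ itself, with coefficients in $q$ and $r_0=\delta+\gamma+\delta\gamma$. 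This explains why only $q$ and $r_0$ appear in the statement.

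I will then solve that cubic in $x$ by the trigonometric (Viète) method:
\begin{itemize}
\item depress it by shifting $x$ by $-(r_0^2-12qr_0+12q^2-12q)/(12q)$;
\item identify $\Delta_0$ and $\Delta_1$ as the usual invariants of the cubic;
\item read off the three real roots $x_k$ with $\cos((\phi+2\pi k)/3)$.
\end{itemize}
The prefactor $-1/(12q)$ and the coefficient $2\sqrt{\Delta_0}$ should come out of the standard normalization $x=-(b+C+\Delta_0/C)/(3a)$ when $C$ is trigonometric. The candidates are the right edge, the left edge, and one spurious root associated with a non-physical branch (e.g.\ $g<-1$ or $g$ between the poles). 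Since the right edge is the largest admissible critical value, I then take $\lambda_+=\max_k x_k$. As a sanity check, $\delta\to0$ should recover $(1+\sqrt\gamma)^2$ in the limit.

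The main obstacle is twofold. The first part is the elimination itself: producing the cubic in $x$ with exactly the stated coefficients, and checking that $\Delta_1^2-4\Delta_0^3\le0$ so that all three roots are real and the arccos is well defined. The second part is justifying that the maximum root is always the true right edge and never a spurious critical value. For this I would argue via the monotonicity of $z(g)$ on the physical branch $g\in(-\min(1,1/\delta,1/\gamma),0)$ near the right edge: on that interval $z$ attains a local minimum whose value dominates the other critical values. I would back this up with a quick check of the extreme regimes.
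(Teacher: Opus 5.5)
Your route differs from the paper's but produces the same cubic. The paper takes the discriminant in $m$ of the bulk cubic, factors it as $\kappa z^2R(z)$, applies the trigonometric formula to $R$, and asserts without argument that the largest root is the right edge. Your S-transform inverse is equivalent: for fixed $z\neq0$, $g=-1-zm$ is affine in $m$, so a double root in $m$ is exactly a point with $z=z(g)$ and $z'(g)=0$. Your route also makes the elimination painless. At a critical point $1/g=r_0g+2qg^2$, so $x=c+2r_0g+3qg^2$ with $c=1+\delta+\gamma=1+r_0-q$. Newton sums over the roots of $h(g)=2qg^3+r_0g^2-1$ (elementary symmetric functions $-r_0/(2q)$, $0$, $1/(2q)$) then give $A=4q$, $B=r_0^2-12qc$, $\Delta_0=r_0(r_0^3+216q^2)$ and the stated $\Delta_1$. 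Your first obstacle also dissolves. The local maximum of $h$ is $r_0^3/(27q^2)-1$, attained at $g=-r_0/(3q)$, and it is $\ge0$ by AM--GM ($r_0\ge 3q^{2/3}$). Hence all three critical points, and so all three $x_k$, are real, and $|\Delta_1|\le 2\Delta_0^{3/2}$.

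The step that fails as written is the root selection, which is precisely the point the paper leaves unjustified. The interval $(-\min(1,1/\delta,1/\gamma),0)$ is not the branch near the right edge. On it every factor of $(1+g)(1+\delta g)(1+\gamma g)$ is positive while $g<0$, so $z(g)<0$; this branch corresponds to $z<0$. In fact $g^2z'(g)=h(g)<0$ there, so it contains no critical point at all, and your argument cannot produce $\lambda_+$ from it. Since $z(g)\sim 1/g$ as $g\to0$, the right edge lives on $g>0$. There $z(g)=1/g+c+r_0g+qg^2$ is strictly convex, so $\lambda_+/\kappa=z(g_+)$, with $g_+$ the unique positive root of $h$.

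To show this equals $\max_k x_k$, replace the ``extreme regimes'' check with a short argument. The function $h$ is increasing on $(-\infty,-r_0/(3q))$ and $h(-2r_0/(3q))=-1-4r_0^3/(27q^2)<0$. So both negative roots of $h$ lie in $(-2r_0/(3q),0)$, where $g(2r_0+3qg)<0$. Their critical values are therefore below $c$, whereas $z(g_+)=c+g_+(2r_0+3qg_+)>c$.
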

\begin{proof}
The proof is deferred to Appendix~\ref{app:cubic}.
\end{proof}

The right edge $\lambda_+$ scales linearly with $\kappa(\vw)$. Since the BBP phase transition depends on whether the spike exceeds $\lambda_+$, increasing $\kappa$ raises the detectability threshold. This formalizes the intuition that ``noisier pooling makes signal detection harder''.

\subsection{Spike behavior of the population covariance}\label{sec:spike_pop}

\begin{theorem}[Population BBP transition]\label{thm:pop_bbp}
    Define $\rho:= \alpha \Vert \vmu \Vert^2 / \kappa$. The top eigenvalue of $\mSigma$ exhibits a BBP phase transition: if $\rho\leq \delta + \sqrt{\delta}$, no outlier exists and the top eigenvalue sticks to $\beta_+:= \kappa (1 + \sqrt{\delta})^2$; if $\rho> \delta + \sqrt{\delta}$, an outlier appears at
        \begin{align}\label{eq:beta_out}
            \beta_\mathrm{out} := \frac{\kappa\rho(\rho-\delta +1)}{\rho - \delta}.
        \end{align}
    The top eigenvector $\hat \vu_\mSigma$ of $\mSigma$ has overlap $\vert \hat{\vu}_\mSigma^\top \vu \vert^2 \to 1 - \delta/(\rho-\delta)^2$ for $\rho> \delta + \sqrt{\delta}$, and $0$ otherwise.
\end{theorem}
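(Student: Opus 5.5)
Since $\mSigma = \alpha\Vert\vmu\Vert^2\vu\vu^\top + \kappa\mSigma_{\mZ} = \kappa\bigl(\rho\,\vu\vu^\top + \mSigma_{\mZ}\bigr)$, it suffices to study the rank-one additive perturbation $\mM := \mSigma_{\mZ} + \rho\,\vu\vu^\top$ and rescale by $\kappa$ at the end. Here $\mSigma_{\mZ} = \frac1V \mZ\mZ^\top$ is a white Wishart matrix with ratio $d/V\to\delta$. Its empirical spectral distribution converges to $\mathrm{MP}_\delta$ with right edge $(1+\sqrt{\delta})^2$, and by standard results its top eigenvalue converges almost surely to this edge. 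Since $\vu$ is deterministic and $\mZ$ is rotationally invariant, we are in the spiked covariance setting covered by \citet{benaych2011eigenvalues}. The proof will therefore follow the classical resolvent / secular-equation route, with explicit computations for the MP law.

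\textbf{Step 1 (secular equation).} For $z$ outside the spectrum of $\mSigma_{\mZ}$, the matrix determinant lemma gives $\det(z\mI - \mM) = \det(z\mI-\mSigma_{\mZ})\,(1 - \rho\,\vu^\top (z\mI - \mSigma_{\mZ})^{-1}\vu)$. Outliers $z>(1+\sqrt\delta)^2$ are therefore solutions of $\rho\,\vu^\top(z\mI-\mSigma_{\mZ})^{-1}\vu = 1$. By isotropy of the Wishart resolvent (a deterministic-equivalent / quadratic-form concentration argument), $\vu^\top (z\mI - \mSigma_{\mZ})^{-1}\vu \to -m_\delta(z)$ uniformly on compacts away from the support. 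Here $m_\delta$ is the MP Stieltjes transform, solving $\delta z m^2 - (1-\delta-z)m + 1 = 0$. The limiting equation is thus $-m_\delta(z) = 1/\rho$.

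\textbf{Step 2 (explicit solution and threshold).} Rather than inverting $m_\delta$, I will substitute $m=-1/\rho$ into the MP quadratic: $\delta z/\rho^2 + (1-\delta-z)/\rho + 1 = 0$. Multiplying by $\rho^2$ gives $z(\delta-\rho) = -\rho(\rho+1-\delta)$, so
\[
z = \frac{\rho(\rho-\delta+1)}{\rho-\delta}.
\]
This must be a genuine solution on the correct branch, i.e.\ $z$ must lie to the right of the edge. The function $z\mapsto -m_\delta(z)$ is decreasing on $((1+\sqrt\delta)^2,\infty)$ from its edge value $-m_\delta((1+\sqrt\delta)^2) = 1/(\delta+\sqrt\delta)$ down to $0$. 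Hence a solution exists if and only if $1/\rho < 1/(\delta+\sqrt\delta)$, i.e.\ $\rho>\delta+\sqrt\delta$. One checks directly that at $\rho=\delta+\sqrt\delta$ the formula gives $z=(1+\sqrt\delta)^2$. Almost-sure convergence of the extreme eigenvalue then follows by a Rouché / Hurwitz argument on the random analytic functions. Below threshold, eigenvalue interlacing (rank-one perturbation) together with convergence of $\lambda_{\max}(\mSigma_{\mZ})$ gives sticking to the edge. Multiplying by $\kappa$ yields $\beta_+$ and $\beta_{\mathrm{out}}$.

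\textbf{Step 3 (eigenvector overlap).} For an outlier eigenpair $(\lambda,\hat\vu)$, write $\hat\vu \propto (\lambda\mI-\mSigma_{\mZ})^{-1}\vu$. Normalizing gives
\[
|\hat\vu^\top\vu|^2 = \frac{(\vu^\top G\vu)^2}{\vu^\top G^2\vu}, \qquad G := (\lambda\mI-\mSigma_{\mZ})^{-1}.
\]
Using $\vu^\top G^2\vu = -\frac{d}{dz}\vu^\top G(z)\vu$ and the concentration of Step 1, the limit is $m_\delta(\lambda)^2 / m_\delta'(\lambda)$ evaluated at $\lambda = \beta_{\mathrm{out}}/\kappa$. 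To compute $m'$, implicitly differentiate the MP quadratic at $m=-1/\rho$: $m'(2\delta z m - 1 + \delta + z) = -(\delta m^2 + m)$. Simplifying with the explicit $z$ should produce $1-\delta/(\rho-\delta)^2$. Below threshold the overlap vanishes, by the standard argument that the top eigenvector delocalizes when the limiting $m'$ blows up at the edge. The overlap is scale-invariant, so $\kappa$ drops out.

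The main obstacle is the rigorous control of the isotropic resolvent limit near the edge, which is needed both for the sticking and the vanishing-overlap statements. I would import this directly from \citet{benaych2011eigenvalues} rather than reprove it. Apart from that, the only delicate point is the algebraic simplification in Step 3.
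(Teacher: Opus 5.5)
Your proposal is correct and follows the paper's route: the Benaych-Georges--Nadakuditi secular equation, then substituting $m=-1/\rho$ into the MP quadratic to obtain $z(\rho)=\rho(\rho-\delta+1)/(\rho-\delta)$, then the overlap $m^2/m'$ (equivalently the paper's $1/(\rho^2 m')$) computed by implicit differentiation. The one step where you differ is the threshold, and your argument is the sound one there: you use that $-m_\delta$ decreases from $1/(\delta+\sqrt\delta)$ at the edge. The paper instead requires $z(\rho)>(1+\sqrt\delta)^2$, but $z(\rho)-(1+\sqrt\delta)^2=(\rho-\delta-\sqrt\delta)^2/(\rho-\delta)$ is positive for every $\rho>\delta$ except at the threshold itself, because for $\rho\in(\delta,\delta+\sqrt\delta)$ the value $-1/\rho$ is the non-Stieltjes root of the quadratic. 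So only a branch check like yours, or equivalently $z'(\rho)=1-\delta/(\rho-\delta)^2>0$, actually yields $\rho>\delta+\sqrt\delta$.
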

\begin{proof}
The proof is deferred to Appendix~\ref{app:pop}.
\end{proof}

This result reveals a fundamental interplay between vocabulary size (through $\delta$) and signal recovery. When $\delta > 0$, the finite vocabulary introduces a \emph{strictly positive} detection threshold $\rho > \delta + \sqrt{\delta}$: even with infinite data ($\gamma\to 0$), one cannot recover the signal if $\alpha\|\vmu\|^2/\kappa$ is too small. For attention, the critical observation is that \emph{increasing $\alpha/\kappa$ lowers the effective threshold}. At the critical point $\rho = \delta + \sqrt{\delta}$, the overlap jumps from $0$ to a positive value as a sharp phase transition.

\subsection{Spike behavior of the sample covariance}\label{sec:spike_sample}

\begin{theorem}[Sample outlier eigenvalue]\label{thm:sample_spike}
    In the supercritical regime, let $\beta := \beta_\mathrm{out}$. If $\beta$ exceeds $\beta_\mathrm{crit} := -1/\underline{m}(\lambda_+)$, then $\mS$ has an outlier $\lambda_\mathrm{out} > \lambda_+$ given by the largest root of
    \begin{equation}\label{eq:lambda_quad}
        \delta\kappa\,\lambda^2 + \left(-\beta^2\gamma + \beta\kappa(\gamma(1+\delta) - 2\delta)\right)\lambda + \beta^2\left(\beta\gamma + \kappa(1-\gamma)(\delta - \gamma)\right) = 0.
    \end{equation}
\end{theorem}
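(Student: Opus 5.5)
The plan is to reduce $\mS$ to a finite-rank perturbation of a matrix whose limiting spectrum is already given by Theorem~\ref{thm:bulk}, and then find the outlier through the standard Baik--Silverstein / Benaych-Georges--Nadakuditi characterization. This characterization says that the outlier sits where the companion Stieltjes transform matches the inverse spike, $\underline{m}(\lambda_\mathrm{out}) = -1/\beta$.

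\textbf{Step 1: structural decomposition.} Write the data matrix as
\[
\mC = \vmu \va^\top + \mZ \mP,
\]
where $a_n = \sum_t w_t \xi_{n,t}$, and $\mP\in\sR^{V\times N}$ is the sparse sampling matrix with $P_{v,n} = \sum_t w_t \mathbf{1}[x_{n,t}=v]$. Conditionally on $\mZ$, the columns of $\mC$ have covariance $\mSigma$ from \eqref{eq:Sigma}. The noise part $\frac1N\mZ\mP\mP^\top\mZ^\top$ is the object whose LSD Theorem~\ref{thm:bulk} identifies as $\kappa(\mathrm{MP}_\delta\boxtimes\mathrm{MP}_\gamma)$. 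The signal enters as a rank-one term together with cross terms $\vmu\va^\top\mP^\top\mZ^\top/N$.

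I would instead work in the ``population'' parametrization. Conditionally on $\mZ$, $\mSigma$ has bulk $\kappa\mSigma_\mZ$, whose LSD is $H = \kappa\,\mathrm{MP}_\delta$, plus a single isolated eigenvalue $\beta=\beta_\mathrm{out}$ from Theorem~\ref{thm:pop_bbp}. Then $\mS$ is a sample covariance with population $\mSigma$. I would invoke the generalized spiked-model result (Bai--Yao type) for populations with general LSD $H$ and a finite number of spikes. That result gives $\lambda_\mathrm{out} = \psi(\beta)$ with
\[
\psi(\beta) = \beta\Bigl(1+\gamma\int \tfrac{t}{\beta-t}\,dH(t)\Bigr),
\]
and the outlier exists if and only if $\psi'(\beta)>0$. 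The equivalent inverse statement is $\underline{m}(\psi(\beta)) = -1/\beta$. Since $\underline{m}$ is increasing on $(\lambda_+,\infty)$, the condition $\psi'(\beta)>0$ is exactly $\beta > -1/\underline{m}(\lambda_+) = \beta_\mathrm{crit}$. This gives the threshold in the statement.

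\textbf{Step 2: algebraic elimination.} Impose $\underline{m}(\lambda) = -1/\beta$. Using the relation stated in Theorem~\ref{thm:bulk}, this means
\[
m(\lambda) = \frac{1}{\gamma}\Bigl(-\frac1\beta + \frac{1-\gamma}{\lambda}\Bigr).
\]
Substitute this into the cubic \eqref{eq:cubic} at $z=\lambda$ and multiply through by $\gamma^3\beta^3\lambda$, or whatever common denominator appears. The $\lambda^3$ and higher terms should cancel, leaving a polynomial of degree two in $\lambda$. I expect this to reproduce \eqref{eq:lambda_quad}, possibly after factoring out a spurious factor. One should double-check this by symbolic computation.

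Of the two roots, the admissible one is the largest. It is the branch for which $\lambda>\lambda_+$ and $\underline{m}$ is the correct (real, increasing, negative) branch of the Stieltjes transform. The other root corresponds to a non-physical branch of the cubic.

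\textbf{Main obstacle.} The difficulty is justifying the spiked-model reduction, since the columns of $\mC$ are not of the form $\mSigma^{1/2}\vg$ with i.i.d. entries: conditionally on $\mZ$, they are finite combinations of table columns. I would handle this with a resolvent argument directly on the decomposition $\mC=\vmu\va^\top+\mZ\mP$. Writing $\mS$ as a rank-$\le 3$ perturbation of $\frac1N\mZ\mP\mP^\top\mZ^\top$, the outlier is characterized by a $3\times3$ (or $2\times 2$) determinant equation in quadratic forms such as $\vu^\top \mQ(z)\vu$ and $\va^\top\tilde{\mQ}(z)\va/N$. Here $\mQ$ and $\tilde{\mQ}$ are the resolvents of the bulk matrix and its companion.

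These quadratic forms then need deterministic equivalents. I would obtain them from the isotropic local law, or from a Gaussian-in-$\mZ$ computation with $\mP$ fixed, which is legitimate since $\mZ$ is independent of $\mP$ and $\va$. The final step is to check that the resulting determinant equation coincides with $\underline{m}(\lambda)=-1/\beta$. This is where the extra factor $\kappa$ and the $\delta$-dependence of $\beta_\mathrm{out}$ must align.
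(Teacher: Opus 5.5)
Your Steps 1--2 follow the paper's own argument in Appendix~\ref{app:sample_spike}. Like the paper, you take the companion outlier equation $\underline m(\lambda_\mathrm{out})=-1/\beta$ from spiked-covariance theory, substitute $m=\frac{1-\gamma}{\gamma\lambda}-\frac{1}{\gamma\beta}$ into the cubic, and express existence as $\beta>\beta_\mathrm{crit}$.

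\textbf{Root selection.} This is the step that fails, in your proposal and in the paper's Step~3 alike: the outlier is the \emph{smaller} root of \eqref{eq:lambda_quad}, not the largest. Your own $\psi$ shows this.
With $H=\kappa\,\mathrm{MP}_\delta$, one has $\psi(\beta)=\beta(1-\gamma)-\gamma\beta^2 m_H(\beta)$, where $m_H(\beta)$ solves $\delta\kappa\beta\,m^2+(\beta+\kappa(\delta-1))m+1=0$. Substituting $m=(\beta(1-\gamma)-\lambda)/(\gamma\beta^2)$ and multiplying by $\gamma^2\beta^3$ gives exactly \eqref{eq:lambda_quad}. This bypasses the cubic. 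It also corrects your expectation about the elimination: no $\lambda^3$ term cancels. After multiplying by $\lambda$, it is the constant term that vanishes, leaving a spurious factor $\lambda$.
So the two roots in $\lambda$ are the images of the two roots in $m$ under the decreasing map $m\mapsto\beta(1-\gamma)-\gamma\beta^2m$. As $\beta\to\infty$, the Stieltjes branch behaves like $-1/\beta$, while the other root tends to $-1/(\delta\kappa)$. The two cannot cross to the right of $\kappa(1+\sqrt\delta)^2$. Hence the Stieltjes branch is the larger $m$-root, and $\psi(\beta)$ is the smaller $\lambda$-root.
The larger $\lambda$-root is $\approx\gamma\beta^2/(\delta\kappa)$ for large $\beta$ and diverges as $\delta\to0$. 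That contradicts the Paul limit checked in Appendix~\ref{app:sample_overlap}.
Concretely, take $\kappa=\delta=\gamma=1$. The quadratic is $\lambda^2-\beta^2\lambda+\beta^3=0$, with $\lambda_+=27/4$ and $\beta_\mathrm{crit}=9/2$. At $\beta=\beta_\mathrm{crit}$ the roots are $27/4$ and $27/2$. At $\beta=10$ they are $\approx 11.27=\psi(10)$ and $\approx 88.7$.
Your criterion ``the root with $\lambda>\lambda_+$'' does not discriminate, since both roots exceed the edge. The right criterion is that $\underline m(\lambda)=-1/\beta$ hold on the Stieltjes branch, i.e.\ the root that leaves $\lambda_+$ at $\beta=\beta_\mathrm{crit}$.

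\textbf{Your ``main obstacle''.} The paper never addresses this; it simply asserts the spiked-model equation. Be aware that the resolvent computation you sketch will not land on $\underline m(\lambda)=-1/\beta$ with the paper's $\beta$ and $\underline m$. There are two reasons.
First, the conditional covariance is not the paper's $\mSigma$. Conditionally on $\mZ$, a token drawn from class $\pm$ has mean $\pm\vmu+\frac{2}{V}\sum_{s_v=\pm}\vz_v$, whose noise part has norm $\approx\sqrt{2\delta}$. So, with balanced signs and $\vs=(s_v)_v$, $\E[\vc\vc^\top\mid\mZ]$ contains $O(1)$ terms such as $\frac{\alpha}{V}(\vmu\vs^\top\mZ^\top+\mZ\vs\vmu^\top)$ that \eqref{eq:Sigma} drops. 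For $T=1$ it equals $\frac1V\mE\mE^\top$, whose outlier appears when $\|\vmu\|^2>\sqrt\delta$, not when $\|\vmu\|^2>\delta+\sqrt\delta$.
Second, the bulk is not the claimed one. We have $\mS=\mE\,(\frac1N\bm P\bm P^\top)\mE^\top$, and each token is used only $O(TN/V)=O(1)$ times. Hence $\frac VN\bm P\bm P^\top$ is a sparse matrix whose spectrum is not Marchenko--Pastur (for $T=1$ it is diagonal with Poisson counts), and the bulk changes accordingly.
What your Steps 1--2 and the paper actually establish is the statement for a Gaussian surrogate. In that surrogate the columns are $\mSigma^{1/2}\bm g_n$, with $\mSigma$ a spike $\beta$ on a bulk of law $\kappa\,\mathrm{MP}_\delta$. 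That reduction is best stated as a hypothesis rather than derived from the token model.
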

\begin{proof}
The proof is deferred to Appendix~\ref{app:sample_spike}.
\end{proof}

\begin{theorem}[Sample eigenvector alignment]\label{thm:sample_overlap}
    When $\lambda_\mathrm{out}$ exists, the top eigenvector $\hat\vu_\mS$ of $\mS$ satisfies
    $|\hat\vu_\mS^\top \hat\vu_\mSigma|^2 = 1/(\beta\,\lambda_\mathrm{out}\,\underline{m}'(\lambda_\mathrm{out}))$,
    where $\underline{m}'$ is computed via implicit differentiation of~\eqref{eq:cubic} (see Appendix~\ref{app:sample_overlap}).
\end{theorem}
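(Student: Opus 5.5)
We will prove Theorem~\ref{thm:sample_overlap} by working conditionally on the embedding noise $\mZ$. We treat $\mS$ as a spiked sample covariance whose population is $\mSigma = \beta\,\hat\vu_\mSigma\hat\vu_\mSigma^\top + \mSigma_\perp$, and then reuse the resolvent machinery behind Theorem~\ref{thm:sample_spike}.

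\textbf{Step 1 (whitened representation).} Conditionally on $\mZ$, the columns $\vc_n$ are i.i.d.\ with covariance $\mSigma$. By the universality remark (Remark~\ref{rem:universality}) and a Lindeberg-type replacement, we may treat them as $\vc_n=\mSigma^{1/2}\vg_n$ with $\vg_n$ isotropic. We write $\mSigma_\perp$ for the restriction of $\mSigma$ to $\hat\vu_\mSigma^\perp$. By Theorem~\ref{thm:pop_bbp}, in the supercritical regime $\beta=\beta_\mathrm{out}$ is an isolated eigenvalue of $\mSigma$, and the spectrum of $\mSigma_\perp$ still converges to $\kappa\,\mathrm{MP}_\delta$, because a rank-one deflation does not change the limiting spectral measure. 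Hence the bulk of $\mS$ and its companion transform $\underline m$ are those of Theorem~\ref{thm:bulk}.

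\textbf{Step 2 (master equations).} We write $\mS=\frac1N\mSigma^{1/2}\mG\mG^\top\mSigma^{1/2}$ and use the standard Benaych-Georges--Nadakuditi / Paul approach. Any eigenvalue $\lambda>\lambda_+$ of $\mS$ satisfies the secular equation $1+\beta\,\underline m(\lambda)\to 0$ asymptotically, that is, $\lambda_\mathrm{out}$ solves $\underline m(\lambda)=-1/\beta$. This is the equation behind Theorem~\ref{thm:sample_spike}, and substituting $\underline m=-1/\beta$ into \eqref{eq:cubic} gives \eqref{eq:lambda_quad}. For the eigenvector we use the Cauchy integral representation
\begin{equation*}
|\hat\vu_\mS^\top\hat\vu_\mSigma|^2=-\frac{1}{2\pi i}\oint_{\Gamma}\hat\vu_\mSigma^\top(\mS-z\mI)^{-1}\hat\vu_\mSigma\,dz,
\end{equation*}
where $\Gamma$ encloses only $\lambda_\mathrm{out}$. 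We then need a deterministic equivalent of the quadratic form $\hat\vu_\mSigma^\top(\mS-z)^{-1}\hat\vu_\mSigma$. Since $\hat\vu_\mSigma$ is an exact eigenvector of $\mSigma$, the classical identity $\mSigma^{1/2}(\mS-z)^{-1}\mSigma^{1/2}\simeq-\bigl(z\,\underline m(z)\mSigma+z\mI\bigr)^{-1}\mSigma$, the Silverstein equivalent for $\frac1N\mSigma^{1/2}\mG\mG^\top\mSigma^{1/2}$, yields
\begin{equation*}
\hat\vu_\mSigma^\top(\mS-z)^{-1}\hat\vu_\mSigma\simeq \frac{-1}{z\bigl(1+\beta\,\underline m(z)\bigr)}.
\end{equation*}
The concentration of these quadratic forms for $z$ bounded away from the bulk follows from the isotropic local law, or more simply from Hanson--Wright plus a union bound on a net of $\Gamma$.

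\textbf{Step 3 (residue).} The right-hand side has a simple pole at $\lambda_\mathrm{out}$, where $1+\beta\underline m=0$. Its residue there is $-1/(\lambda_\mathrm{out}\,\beta\,\underline m'(\lambda_\mathrm{out}))$, so the contour integral gives $|\hat\vu_\mS^\top\hat\vu_\mSigma|^2\to 1/(\beta\lambda_\mathrm{out}\underline m'(\lambda_\mathrm{out}))$. The derivative $\underline m'=\gamma m'+(1-\gamma)/\lambda^2$ is obtained by implicitly differentiating \eqref{eq:cubic}, which is valid because $m$ is analytic off the support.

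\textbf{Main obstacle.} The delicate point is justifying the isotropic deterministic equivalent when $\mSigma$ itself is random and depends on $\mZ$, and when the $\vc_n$ are not Gaussian: they are mixtures over a finite table with shared tokens. Conditionally on $\mZ$, however, the $\vc_n$ are independent, bounded-moment, and have covariance $\mSigma$. The key check is that the quadratic forms $\vc_n^\top\mA\vc_n$ concentrate around $\mathrm{tr}(\mSigma\mA)$. This holds because each $\vc_n$ averages $T=O(1)$ randomly chosen columns of $\mZ$, and $\frac1d\vz_v^\top\mA\vz_{v'}$ concentrates uniformly over $v,v'$. We would establish this concentration first, then apply the standard generalized-Marchenko--Pastur resolvent argument conditionally on $\mZ$.
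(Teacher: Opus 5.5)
Your argument is correct within the Gaussian-surrogate model the paper itself uses, but it takes a genuinely different route. The paper never uses a deterministic equivalent involving an eigenvector. It freezes $\mW$ and $\mSigma_\perp$ and varies only the spike in $\mS(\beta)=\mSigma(\beta)^{1/2}\mW\mSigma(\beta)^{1/2}$. It then proves the exact finite-$N$ Hellmann--Feynman identity $|\hat\vu_\mS^\top\hat\vu_\mSigma|^2=(\beta/\lambda)\,d\lambda/d\beta$. Implicit differentiation of $\underline{m}(\lambda_\mathrm{out}(\beta))=-1/\beta$ gives $d\lambda_\mathrm{out}/d\beta=1/(\beta^2\underline{m}'(\lambda_\mathrm{out}))$, hence the formula. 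You instead integrate the isotropic equivalent $\hat\vu_\mSigma^\top(\mS-z\mI)^{-1}\hat\vu_\mSigma\simeq-1/(z(1+\beta\underline{m}(z)))$ around $\lambda_\mathrm{out}$, and your residue and orientation are right.

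The paper's route is lighter: it needs only the outlier-location law, and it shows the $1/\lambda_\mathrm{out}$ factor coming from the multiplicative structure. However, it silently exchanges $N\to\infty$ with $d/d\beta$. That exchange is legitimate, because the top eigenvalue of the affine family $\mW^{1/2}\mSigma(\beta)\mW^{1/2}$ is convex in $\beta$, and derivatives of convex functions pass to a differentiable limit; the paper does not say so. Your route needs an isotropic law for the resolvent, which is heavier machinery. In return it gives location and overlap from a single object, and it applies to any vector that is deterministic given $\mZ$. With $\vu$ in place of $\hat\vu_\mSigma$, only the $\hat\vu_\mSigma$-component of $\vu$ produces a pole at $\lambda_\mathrm{out}$, since $1+\sigma\underline{m}(\lambda_\mathrm{out})=1-\sigma/\beta$ stays away from $0$ for bulk eigenvalues $\sigma$ of $\mSigma$. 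This shows the factorization (\ref{eq:total_alignment}) is asymptotically exact, which the derivative identity alone does not give.

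One caveat, shared with the paper: your ``Main obstacle'' paragraph does not actually justify Step~1. Uniform concentration of $\frac1d\vz_v^\top\mA\vz_{v'}$ holds for $\mA$ independent of $\mZ$. In a leave-one-out argument conditional on $\mZ$, however, $\mA$ is a resolvent built from the other samples, and these reuse each vocabulary column about $NT/V=O(1)$ times. For $T=1$, Sherman--Morrison gives $\frac1d\vz_v^\top\mA\vz_v\approx m/(1+\gamma N_v m)$, where $N_v$ is the asymptotically Poisson number of other samples using token $v$. So $\frac1d\vc_n^\top\mA\vc_n$ does not concentrate. Correspondingly, for $T=1$ the fourth moment of the true ESD exceeds that of $\mathrm{MP}_\delta\boxtimes\mathrm{MP}_\gamma$ by $\delta\gamma^2$, because the token multiplicities are classical rather than free Poisson. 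Thus $\vc_n=\mSigma^{1/2}\bm{g}_n$ is a modelling replacement, not a consequence of Remark~\ref{rem:universality}. The paper's proof rests on exactly the same replacement: its Step~0, and its Lemma stated for $\mS(\beta)=\mSigma^{1/2}\mW\mSigma^{1/2}$ with Gaussian $\mW$. Relative to the paper your proof is complete; just do not present Step~1 as proved.
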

\begin{proof}
The proof is deferred to Appendix~\ref{app:sample_overlap}.
\end{proof}

The sample-level phase transition introduces a \emph{second} detection barrier due to finite $N$. Even if the population covariance has an outlier ($\rho > \delta + \sqrt{\delta}$), the sample covariance may fail to exhibit a separated spike if $N$ is too small. The total alignment of $\hat\vu_\mS$ with the hidden signal $\vu$ factorizes as
\begin{align}\label{eq:total_alignment}
    |\hat\vu_\mS^\top \vu|^2 \approx \underbrace{|\hat\vu_\mS^\top \hat\vu_\mSigma|^2}_{\text{sample noise}} \cdot \underbrace{|\hat\vu_\mSigma^\top \vu|^2}_{\text{vocabulary noise}} = \frac{1}{\beta\,\lambda_\mathrm{out}\,\underline{m}'(\lambda_\mathrm{out})} \left(1 - \frac{\delta}{(\rho - \delta)^2}\right).
\end{align}
The first factor captures degradation from finite samples ($\gamma > 0$) and the second from finite vocabulary ($\delta > 0$). Attention improves \emph{both} factors through $\alpha/\kappa$: a larger ratio widens the distance between the bulk edge $\lambda_+$ and the outlier $\lambda_{\text{out}}$, therefore yielding better signal recovery.

\subsection{Phase transition thresholds}\label{sec:phase}
Recalling $\rho = \alpha\|\vmu\|^2/\kappa$, the population BBP condition translates to $\|\vmu\| > \mu_\mathrm{pop}(\vw) := \sqrt{\kappa(\delta + \sqrt{\delta})/\alpha}$. The sample threshold $\mu_\mathrm{samp}(\vw)$ is obtained by solving $\beta_\mathrm{out}(\rho) = \beta_\mathrm{crit}$, yielding a quadratic in $\rho$ (details in Appendix~\ref{app:phase}). Both thresholds decrease when $\alpha/\kappa$ increases, confirming that maximizing the effective SNR through $\vw$ is the key to improving signal detectability and recovery.

\vspace{0.25em}
The detectability thresholds of this section depend on $\vw$ exclusively through the single scalar $\alpha(\vw)/\kappa(\vw)$. For optimality, we aim to characterize $\vw$ that maximizes this ratio which translates to a simple spectral optimization problem that we solve in closed form in Section~\ref{sec:optimal} below.

\section{Optimal Attention Weights}\label{sec:optimal}

The following result characterizes the attention weights $\vw$ that maximize signal recovery.

\begin{theorem}[Optimal attention weights]\label{thm:optimal}
Let $\mR$ be the (symmetric) positional correlation matrix, with largest eigenvalue $\lambda_{\mathrm{max}}(\mR)$ and corresponding eigenspace $\gE_{\mathrm{max}}$. The optimization problem
\begin{align}\label{eq:opt_problem}
    \max_{\vw\in\sR^T} \frac{\alpha(\vw)}{\kappa(\vw)} = \frac{\vw^\top \mR \vw}{\vw^\top \vw} \quad \text{subject to} \quad \vone^\top \vw = 1
\end{align}
satisfies: \textbf{(a)} the supremum equals $\lambda_{\mathrm{max}}(\mR)$; \textbf{(b)} a maximizer exists if and only if $\vone$ is not orthogonal to $\gE_{\mathrm{max}}$, in which case
\begin{align}\label{eq:w_opt}
    \vw^\opt = \frac{\vv_{\mathrm{max}}}{\vone^\top \vv_{\mathrm{max}}}, \quad \text{where} \quad \vv_{\mathrm{max}} \in \gE_{\mathrm{max}} \text{ with } \vone^\top \vv_{\mathrm{max}} \neq 0,
\end{align}
and $\alpha(\vw^\opt)/\kappa(\vw^\opt) = \lambda_{\mathrm{max}}(\mR)$.
\end{theorem}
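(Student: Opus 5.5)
The approach is the Rayleigh quotient characterization, combined with the fact that the ratio $\alpha/\kappa$ is invariant under rescaling of $\vw$. The affine constraint $\vone^\top\vw=1$ then only excludes vectors orthogonal to $\vone$.

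\textbf{Upper bound.} Since $\mR$ is symmetric, the Rayleigh--Ritz theorem gives $\vw^\top\mR\vw\le\lambda_{\max}(\mR)\,\vw^\top\vw$ for every $\vw\neq\vzero$. The constraint $\vone^\top\vw=1$ excludes $\vw=\vzero$, so the quotient is well defined on the feasible set and is bounded by $\lambda_{\max}(\mR)$. Equality holds if and only if $\vw\in\gE_{\max}$. To see this, expand $\vw$ in an orthonormal eigenbasis of $\mR$: then
\[
\lambda_{\max}\|\vw\|^2-\vw^\top\mR\vw=\sum_i(\lambda_{\max}-\lambda_i)c_i^2,
\]
which vanishes exactly when every coefficient $c_i$ outside $\gE_{\max}$ is zero.

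\textbf{Existence and form of the maximizer.} The maximum is attained exactly when some feasible $\vw$ lies in $\gE_{\max}$, that is, when there is $\vv\in\gE_{\max}$ with $\vone^\top\vv=1$. Because $\gE_{\max}$ is a linear subspace, such a $\vv$ exists if and only if $\vone$ is not orthogonal to $\gE_{\max}$. In that case, take any $\vv_{\max}\in\gE_{\max}$ with $\vone^\top\vv_{\max}\neq0$ and normalize it as in \eqref{eq:w_opt}. This vector is feasible and lies in $\gE_{\max}$, so its ratio equals $\lambda_{\max}(\mR)$. Conversely, every maximizer lies in $\gE_{\max}$ and satisfies $\vone^\top\vw=1$, so it has this form with $\vv_{\max}=\vw$. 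This proves the ``only if'' direction of (b).

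\textbf{Supremum when $\vone\perp\gE_{\max}$.} This is the one step that needs care, because there is no maximizer to evaluate. Fix a unit vector $\vv\in\gE_{\max}$. Let $\vone_0:=\vone/\|\vone\|^2$, which satisfies $\vone^\top\vone_0=1$. Consider the feasible family $\vw_t:=\vone_0+t\vv$ for $t\in\sR$. Since $\vone^\top\vv=0$, each $\vw_t$ satisfies the constraint. As $t\to\infty$, the quotient of $\vw_t$ is
\[
\frac{t^2\lambda_{\max}+2t\,\vv^\top\mR\vone_0+\vone_0^\top\mR\vone_0}{t^2+\|\vone_0\|^2},
\]
where the cross term in the denominator vanishes because $\vone_0\perp\vv$. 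This ratio tends to $\lambda_{\max}(\mR)$. Combined with the upper bound, the supremum equals $\lambda_{\max}(\mR)$ in every case, which gives (a).
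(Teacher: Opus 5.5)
Your proof is correct and follows essentially the same route as the paper's: scale invariance of the Rayleigh quotient under the affine constraint, the Rayleigh--Ritz bound with equality exactly on $\gE_{\mathrm{max}}$, and attainability if and only if some $\vv\in\gE_{\mathrm{max}}$ has $\vone^\top\vv\neq 0$. Your explicit feasible sequence $\vw_t=\vone/T+t\vv$ for the case $\vone\perp\gE_{\mathrm{max}}$ is a useful addition, because the paper only asserts that the supremum is still $\lambda_{\mathrm{max}}(\mR)$ there, mentioning an unspecified maximizing sequence with $\|\vw_k\|\to\infty$; your computation proves it directly, and in fact the numerator cross term $\vv^\top\mR\vone_0=\lambda_{\mathrm{max}}\,\vv^\top\vone_0$ also vanishes.
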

\begin{proof}
The proof is deferred to Appendix~\ref{app:optimal}.
\end{proof}

This result reveals that the best attention weights for signal recovery are determined entirely by the spectral structure of the positional correlation matrix $\mR$. The constraint $\sum_t w_t = 1$ does not reduce the achievable SNR because the Rayleigh quotient $\alpha/\kappa$ is scale-invariant. Thus the optimal value $\lambda_{\mathrm{max}}(\mR)$ is the \emph{fundamental limit} on the effective SNR for any pooling strategy. When $\lambda_{\mathrm{max}}$ is simple, $\vw^\opt$ is unique (up to sign normalization) and concentrates weight on the positions that contribute most to the dominant correlation mode.

In particular, for a \emph{prefix model} with $L$ correlated positions and $T-L$ independent ones, $\lambda_{\mathrm{max}}(\mR) = L$ (the top eigenvalue of the block $\vone_L\vone_L^\top + \mI_{T-L}$), achieved when $\vw^\opt$ places all weight on the first $L$ positions. In contrast, mean pooling achieves only $\alpha_\id/\kappa_\id = L^2/T + (T-L)/T$, which can be much smaller.

\paragraph{Phase diagram.}
To visualize the combined effect of the hidden signal strength $\Vert \vmu \Vert$ and the vocabulary ratio $\delta = d/V$ on recovery, we plot the theoretical alignment $(\hat\vu_\mS^\top \vu)^2$ given by the closed form~\eqref{eq:total_alignment} as a heatmap over the $(\Vert \vmu \Vert, \delta)$ plane. We use here the \emph{spiked} correlation model $\mR=\mI_T+\theta_R\,\vu_R\vu_R^\top$, with $\theta_R>0$ and $\vu_R$ a unit vector supported on the first $L$ positions, for which the Rayleigh quotient $\alpha/\kappa$ contrasts strongly between a generic weight vector and the top eigenvector of $\mR$, to highlight the advantage of considering the optimal weights rather than an arbitrary pooling vector. Figure~\ref{fig:phase_diagram} compares mean pooling ($\vw=\vone/T$) with the optimal weights $\vw^\opt$ of Theorem~\ref{thm:optimal}; the dashed white curve is the sample BBP phase-transition boundary $\Vert \vmu \Vert = \mu_\mathrm{samp}(\vw,\delta)$ obtained from~\eqref{eq:beta_out} and~\eqref{eq:lambda_quad}. The recoverable region (bright area) is substantially larger for the optimal weights, and its lower frontier is shifted to smaller $\Vert \vmu \Vert$, illustrating how maximizing the ratio $\alpha(\vw)/\kappa(\vw)$ lowers the detectability threshold uniformly in $\delta$.

\begin{figure}[t!]
    \centering
    \begin{subfigure}[b]{0.46\textwidth}
        \centering
        \includegraphics[width=\textwidth]{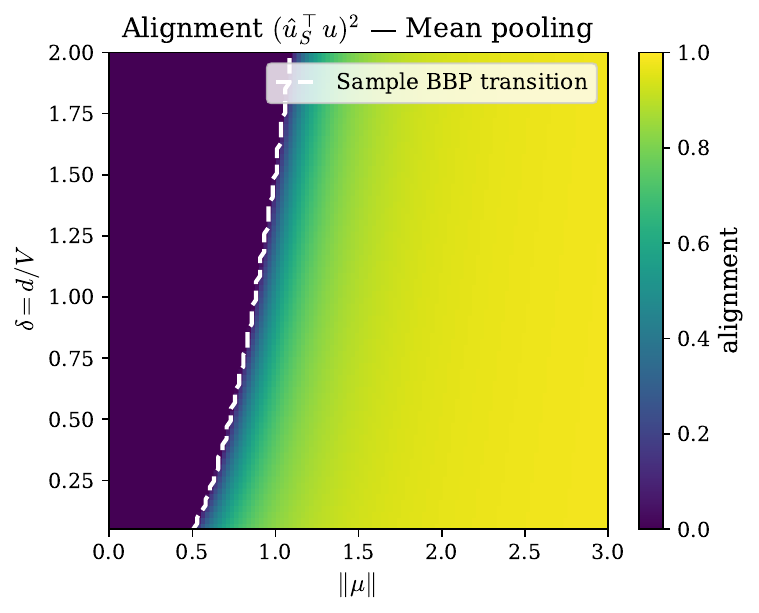}
        \caption{Mean pooling ($\vw=\vone/T$)}
        \label{fig:pd_mean}
    \end{subfigure}
    \hfill
    \begin{subfigure}[b]{0.46\textwidth}
        \centering
        \includegraphics[width=\textwidth]{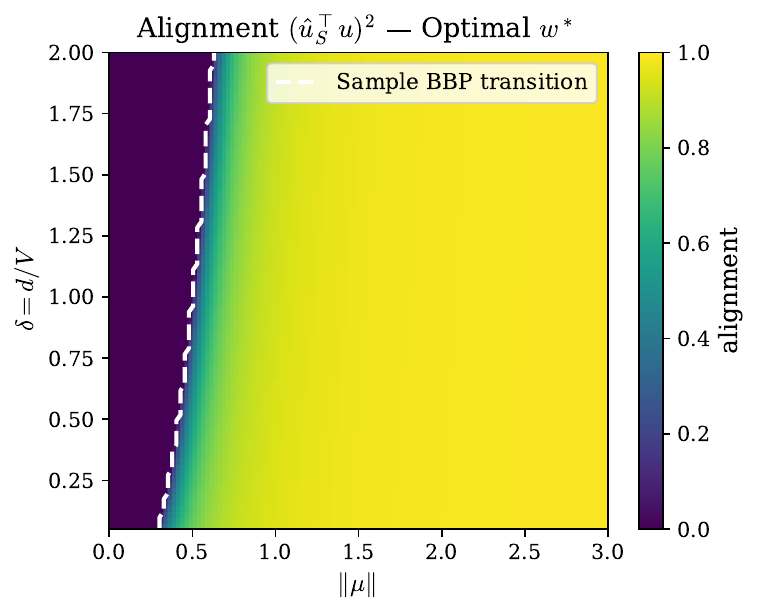}
        \caption{Optimal weights $\vw^\opt$}
        \label{fig:pd_opt}
    \end{subfigure}
    \caption{Phase diagrams of the theoretical alignment $(\hat\vu_\mS^\top \vu)^2$ in the $(\Vert \vmu\Vert,\delta)$ plane, at fixed $\gamma=d/N=0.5$, $T=20$, for the \emph{spiked} positional correlation model $\mR=\mI_T+\theta_R\,\vu_R\vu_R^\top$ with $\theta_R=10$ and $\vu_R$ uniformly supported on the first $L=5$ positions. The heatmap is obtained from the closed-form formula~\eqref{eq:total_alignment}; the dashed white curve is the sample BBP phase-transition threshold $\Vert \vmu\Vert = \mu_\mathrm{samp}(\vw,\delta)$. Under the spiked $\mR$, the optimal weights $\vw^\opt\propto \vu_R$ concentrate mass on the $L$ signal-carrying positions and achieve $\alpha/\kappa=\lambda_{\max}(\mR)=1+\theta_R=11$, while mean pooling only sees the average correlation and pays an $O(T)$ dilution; the recoverable region of the optimal panel is therefore substantially wider, and its lower frontier is shifted to visibly smaller $\Vert\vmu\Vert$ across the whole range of $\delta$.}
    \label{fig:phase_diagram}
\end{figure}

\section{A Particular Case: Causal Self-Attention}\label{sec:causal}

The framework of Section~\ref{sec:optimal} treats the pooling vector $\vw$ as a free parameter. We now instantiate it on a concrete, parameter-free attention mechanism: causal self-attention with score scaling $\tau/d$. 
In this regime the attention weights concentrate to deterministic, sequence-independent values, so Theorems~\ref{thm:bulk}--\ref{thm:optimal} apply with $\vw$ replaced by the limiting causal weight vector derived below, where the scalars $\alpha(\vw)$ and $\kappa(\vw)$ are given in closed forms involving harmonic numbers. 

\begin{lemma}[Deterministic weight limit]\label{lem:det_weights}
    Under Assumption~\ref{ass:growth} with $T$ fixed, causal attention pooling weights converge: $\vw^\mathrm{att} = \vw^{(0)} + O_P(d^{-1/2})$, where $\vw^{(0)}$ uses uniform weights on each admissible key set.
\end{lemma}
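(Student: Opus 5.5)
Causal self-attention here means: for each query position $s$, the weights over keys $t\le s$ are $a_{s,t}=\softmax_{t\le s}\big(\tfrac{\tau}{d}\,X_s^{(n)\top}X_t^{(n)}\big)$. The pooled vector is then obtained by averaging or selecting the query outputs, which induces an effective weight vector $\vw^{\mathrm{att}}$ over positions. The plan is to show that every score $\tfrac{\tau}{d}X_s^\top X_t$ is $O_P(d^{-1/2})$. Once the scores are uniformly small, a first-order expansion of the softmax gives $a_{s,t}=\tfrac{1}{s}+O_P(d^{-1/2})$, so each query attends uniformly to its admissible key set. Since $T=O(1)$, a finite union bound and linear aggregation transfer the rate to $\vw^{\mathrm{att}}$. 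The limit $\vw^{(0)}$ is the resulting harmonic-type weight vector.

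\textbf{Step 1: scores of distinct positions.} For $s\neq t$, expand
\[
X_s^\top X_t = \xi_s\xi_t\|\vmu\|^2 + \xi_s\vmu^\top\vz_{x_t} + \xi_t\vmu^\top\vz_{x_s} + \vz_{x_s}^\top\vz_{x_t}.
\]
The first term is $O(1)$ by Assumption~\ref{ass:growth}. The cross terms are $\gN(0,\|\vmu\|^2)$ and hence $O_P(1)$. If $x_s\neq x_t$, the last term is an inner product of two independent standard Gaussian vectors in $\sR^d$, so it has variance $d$ and is $O_P(\sqrt d)$. After multiplying by $\tau/d$, the score is $O_P(d^{-1/2})$.

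There is a subtlety: two positions may draw the same vocabulary token, in which case $\vz_{x_s}^\top\vz_{x_t}=\|\vz\|^2\approx d$. Tokens are drawn uniformly within a class of size $V/2$, so this collision has probability at most $2/V=O(d^{-1})$. Over the finitely many pairs it is therefore negligible at the $O_P(d^{-1/2})$ scale; formally I would bound it by Markov's inequality on the event of a collision.

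\textbf{Step 2: the diagonal score, which is the main obstacle.} For the diagonal term, $X_s^\top X_s=\|\vmu\|^2+2\xi_s\vmu^\top\vz+\|\vz\|^2 = d+O_P(\sqrt d)$. This gives a score of $\tau+O_P(d^{-1/2})$, which is not small. The self key therefore receives the deterministic boost $e^\tau$, and the limit is uniform only on the admissible key set. How to handle this depends on the paper's convention. Either the admissible set for query $s$ is the strict past $t<s$ (the first position uses itself only), or the diagonal is included and $\vw^{(0)}$ carries the deterministic factor.

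I would first adopt the reading that the lemma's ``admissible key set'' excludes self-matching, or equivalently that the scores are computed with self-interaction removed. The argument then reduces to Step 1. I would add a remark that including the diagonal merely replaces uniform weights by the deterministic weights $e^\tau/(e^\tau+s-1)$ and $1/(e^\tau+s-1)$, with the same $O_P(d^{-1/2})$ error. The concentration $\|\vz\|^2/d=1+O_P(d^{-1/2})$ follows from the $\chi^2_d$ central limit theorem.

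\textbf{Step 3: softmax perturbation and aggregation.} For scores $\varepsilon_t$ with $\max_t|\varepsilon_t|=O_P(d^{-1/2})$ over a set of size $s\le T$, the Lipschitz property of softmax gives $|\softmax(\varepsilon)_t-1/s|\le 2\max|\varepsilon|$. The effective pooling weight is $w^{\mathrm{att}}_t=\sum_{s\ge t}\pi_s a_{s,t}$, where $\pi$ is the fixed query-aggregation vector; for uniform $\pi=\vone/T$ this gives $w^{(0)}_t=\tfrac1T\sum_{s=t}^T \tfrac1s=(H_T-H_{t-1})/T$. This map is linear with coefficients bounded in $T$, so $\vw^{\mathrm{att}}-\vw^{(0)}=O_P(d^{-1/2})$, uniformly over the $N$ sequences in distribution. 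The weights sum to $1$ automatically.
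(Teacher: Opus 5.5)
Your proof is essentially the paper's: it adopts the same strict-past convention ($\mathcal{K}_1=\{1\}$, $\mathcal{K}_s=\{1,\ldots,s-1\}$ for $s\ge 2$, which is exactly how it avoids the non-vanishing diagonal score $\tau+o_P(1)$ you flagged), shows the admissible scores are $O_P(d^{-1/2})$, linearizes each softmax row, and averages via $\vw^{\mathrm{att}}=T^{-1}\mA^\top\vone$ with $T$ fixed; your token-collision remark (probability $O(1/V)$) also covers a case the paper's Step~1 silently skips. The one thing to fix is the explicit limit in Step~3, together with the claim $a_{s,t}=1/s+O_P(d^{-1/2})$ in your opening paragraph: $\frac1T\sum_{s\ge t}\frac1s$ is uniform weighting on the inclusive sets $\{1,\ldots,s\}$, which matches neither of your Step~2 readings unless $\tau=0$, whereas the strict-past sets give $w^{(0)}_1=(1+\mathcal{H}_{T-1})/T$ and $w^{(0)}_t=(\mathcal{H}_{T-1}-\mathcal{H}_{t-1})/T$ for $t\ge 2$, i.e.\ the harmonic weights of Proposition~\ref{prop:harmonic}.
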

\begin{proof}
The proof is deferred to Appendix~\ref{app:det_weights}.
\end{proof}

\begin{proposition}[Harmonic weights]\label{prop:harmonic}
    With harmonic numbers $\mathcal{H}_n := \sum_{k=1}^n 1/k$, the deterministic causal pooled weights are
    \begin{align}\label{eq:w_cau}
        w_1^{(0)} = \frac{1 + \mathcal{H}_{T-1}}{T}, \qquad w_s^{(0)} = \frac{\mathcal{H}_{T-1} - \mathcal{H}_{s-1}}{T}, \quad s=2,\ldots,T.
    \end{align}
\end{proposition}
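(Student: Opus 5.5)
The plan is to make explicit the causal masking convention behind Lemma~\ref{lem:det_weights}, write the pooled output as an average of per-query attention outputs, and then exchange the order of summation.

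\textbf{Step 1: the attention convention.} Causal self-attention produces, for each query position $t$, an output $\sum_{s\in\mathcal{K}_t} A_{ts} X_s$. Here $A_{ts}$ are softmax weights over the admissible key set $\mathcal{K}_t$. The final pooled representation is the average of these outputs over the $T$ queries, so
\begin{align*}
\vc=\frac1T\sum_{t=1}^T\sum_{s\in\mathcal{K}_t}A_{ts}X_s,
\qquad\text{hence}\qquad
w_s=\frac1T\sum_{t:\,s\in\mathcal{K}_t}A_{ts}.
\end{align*}
I will take $\mathcal{K}_1=\{1\}$ and, for $t\ge 2$, $\mathcal{K}_t=\{1,\dots,t-1\}$, i.e.\ strictly earlier tokens. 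This is the convention that reproduces the stated formula; the inclusive convention $\mathcal{K}_t=\{1,\dots,t\}$ would instead give $(\mathcal{H}_T-\mathcal{H}_{s-1})/T$. I will state this choice of admissible key sets explicitly as part of the setup.

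\textbf{Step 2: uniform limiting weights.} By Lemma~\ref{lem:det_weights}, replacing $A_{ts}$ by the deterministic uniform weights $1/|\mathcal{K}_t|$ costs only $O_P(d^{-1/2})$. The intuition is that the scores $\tau X_t^\top X_s/d$ converge to constants that do not depend on $s$ within $\mathcal{K}_t$. So $A^{(0)}_{11}=1$, and $A^{(0)}_{ts}=1/(t-1)$ for $t\ge2$ and $s\le t-1$.

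\textbf{Step 3: exchanging the sums.} For $s\ge 2$, key $s$ is admissible exactly for the queries $t=s+1,\dots,T$. Substituting $k=t-1$ gives
\begin{align*}
w^{(0)}_s=\frac1T\sum_{t=s+1}^T\frac{1}{t-1}=\frac1T\sum_{k=s}^{T-1}\frac1k=\frac{\mathcal{H}_{T-1}-\mathcal{H}_{s-1}}{T}.
\end{align*}
For $s=1$ there is an extra contribution of $1$ from query $t=1$. Adding the terms from all queries $t\ge2$ gives $w^{(0)}_1=(1+\mathcal{H}_{T-1})/T$, as claimed in~\eqref{eq:w_cau}.

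\textbf{Step 4: sanity check.} Each row of $A^{(0)}$ sums to one, so $\sum_s w^{(0)}_s=T/T=1$, consistent with~\eqref{eq:pooling}.

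\textbf{Main obstacle.} There is no real analytic difficulty; Step~2 is exactly the content of Lemma~\ref{lem:det_weights}. The only delicate point is pinning down the masking convention, namely strict versus inclusive causality and how the first query is treated. The plan is to fix this convention in Step~1 and present the double-sum interchange of Step~3 as the core computation.
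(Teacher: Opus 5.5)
Your proof is correct and matches the paper's: you take the uniform limiting rows $A^{(0)}_{11}=1$ and $A^{(0)}_{ts}=1/(t-1)$ for $s\le t-1$ from Lemma~\ref{lem:det_weights}, then compute the column averages $w^{(0)}_s=T^{-1}\sum_t A^{(0)}_{ts}$, treating $s=1$ (with the extra $1$ from the first row) separately from $s\ge 2$ (reindexed by $k=t-1$). One correction to your aside: under the $\tau/d$ scaling the self-score $\tau\|X_t\|^2/d\to\tau$ does not vanish, so an inclusive mask would give non-uniform rows, and hence not $(\mathcal{H}_T-\mathcal{H}_{s-1})/T$, unless $\tau=0$; this is why the no-self mask matters and is not just a bookkeeping convention.
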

\begin{proof}
The proof is deferred to Appendix~\ref{app:harmonic}.
\end{proof}

Specifically, these weights are monotonically decreasing: earlier positions receive more weight, with $w_1 \propto 1 + \ln(T-1)$ and $w_T = 0$. This ``early-position bias'' is a direct consequence of the causal mask and is the mechanism through which causal attention improves signal recovery when early tokens are more informative. Causal attention is \emph{not} optimal in general (it does not know $\mR$), but its built-in bias toward early positions makes it a good heuristic when signal concentrates at the beginning of the sequence, as in autoregressive language models.

In particular, for the prefix model~with $\mR = [\vone_L\vone_L^\top, \vzero; \vzero, \mI_{T-L}]$, the causal scalars are $\kappa_\cau = (2T-1+\mathcal{H}_{T-1})/T^2$ and $\alpha_\cau$ is given by~\eqref{eq:alpha_cau_full} in the appendix. Causal attention improves over mean pooling ($\alpha_\cau/\kappa_\cau > \alpha_\id/\kappa_\id$) whenever $L < T$, with a logarithmic gain for $L=1$.

\section{Experiments}\label{sec:experiments}

Having derived the limiting spectrum, the phase-transition thresholds, the optimal weights, and their causal-attention specialization, we now verify each of these predictions on finite-dimensional simulations. We further consider synthetic simulations on a downstream classification task to connect our findings with the ML realm. 

\begin{figure}[t!]
    \centering
    \begin{subfigure}[b]{0.48\textwidth}
        \centering
        \includegraphics[width=\textwidth]{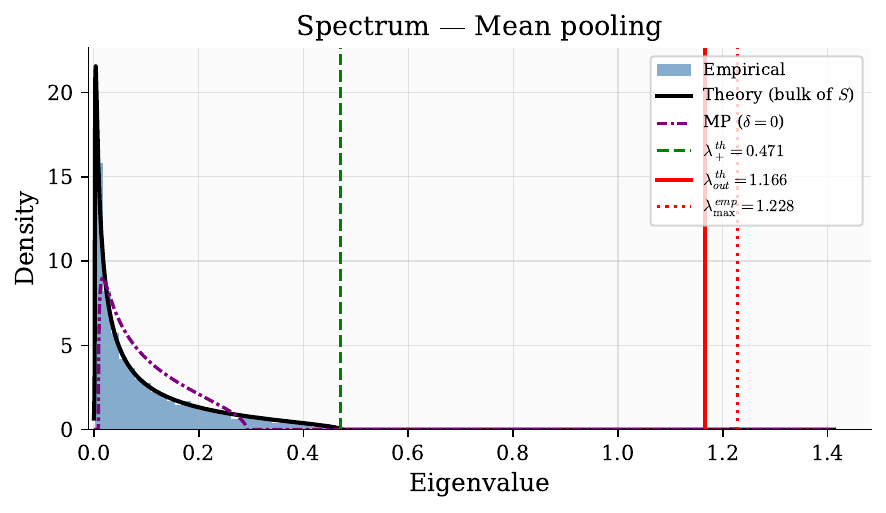}
        \caption{Mean pooling ($\vw = \vone/T$)}
    \end{subfigure}
    \hfill
    \begin{subfigure}[b]{0.48\textwidth}
        \centering
        \includegraphics[width=\textwidth]{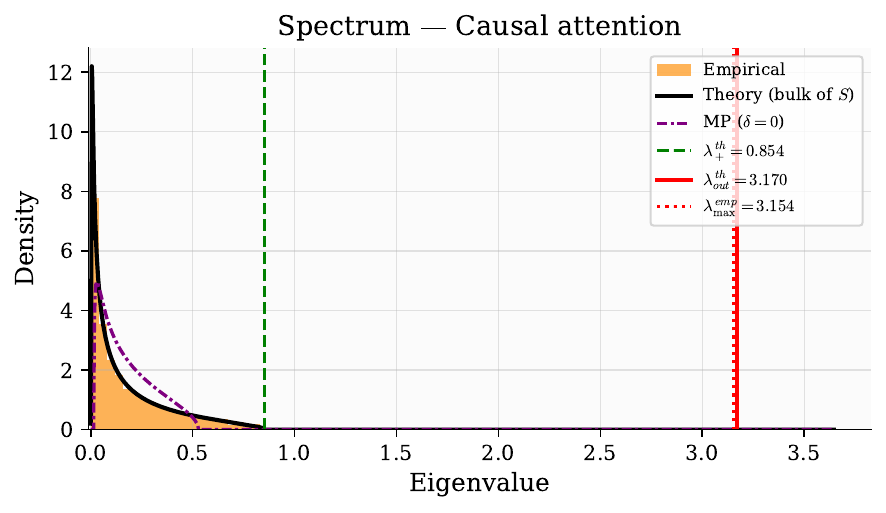}
        \caption{Causal attention ($\vw = \vw^\cau$)}
    \end{subfigure}
    \caption{Empirical vs.\ theoretical eigenvalue density of $\mS$ for $d\!=\!500$, $V\!=\!800$, $N\!=\!1000$, $T\!=\!10$, $L\!=\!3$, $\|\vmu\|\!=\!2.5$. Vertical lines: bulk edge $\lambda_+$ (green), theoretical outlier (red solid), empirical top eigenvalue (red dotted). The black curve is the theoretical bulk from Theorem~\ref{thm:bulk}, and the purple dash-dot curve is the scaled Marchenko--Pastur law (corresponding to $\delta\!=\!0$, i.e.\ infinite vocabulary): its support is visibly narrower, highlighting the extra spread induced by the finite-vocabulary factor $\mathrm{MP}_\delta$ in the free multiplicative convolution. The causal bulk is wider ($\kappa_\cau > \kappa_\id$) but the outlier is much more separated, confirming the improved SNR.}
    \label{fig:bulk}
\end{figure}

\paragraph{Bulk spectrum (Figure~\ref{fig:bulk}).} The theoretical bulk density from Theorem~\ref{thm:bulk} matches the empirical histograms precisely for both pooling strategies. Comparing panels, causal attention has a wider bulk (larger $\kappa$) but a dramatically more separated outlier ($\lambda_\mathrm{out}^\mathrm{cau} = 3.17$ vs $\lambda_\mathrm{out}^\mathrm{id} = 1.17$), confirming that the SNR gain $\alpha_\cau/\kappa_\cau > \alpha_\id/\kappa_\id$ translates to better signal separation.

\begin{figure}[t!]
    \centering
    \begin{subfigure}[b]{0.48\textwidth}
        \centering
        \includegraphics[width=\textwidth]{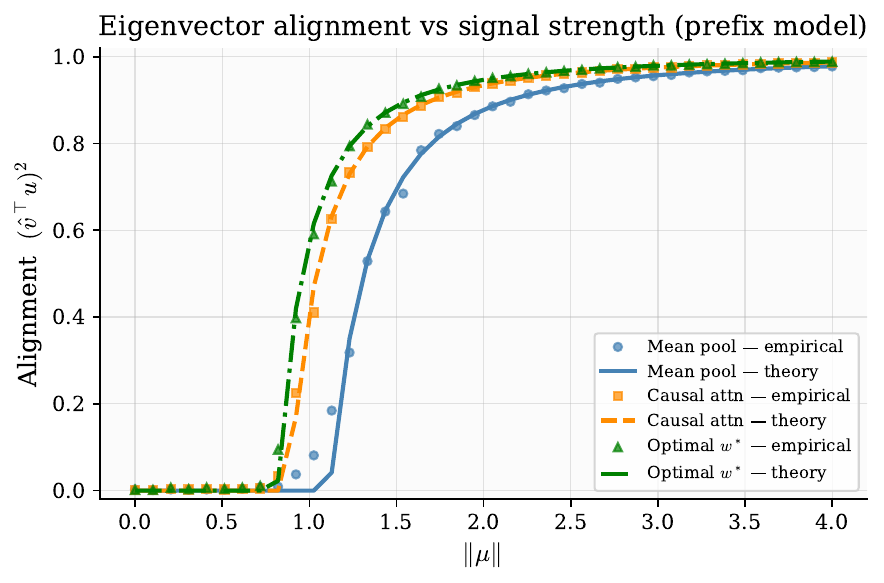}
        \caption{Prefix model ($T\!=\!10$, $L\!=\!3$)}
        \label{fig:align_prefix}
    \end{subfigure}
    \hfill
    \begin{subfigure}[b]{0.48\textwidth}
        \centering
        \includegraphics[width=\textwidth]{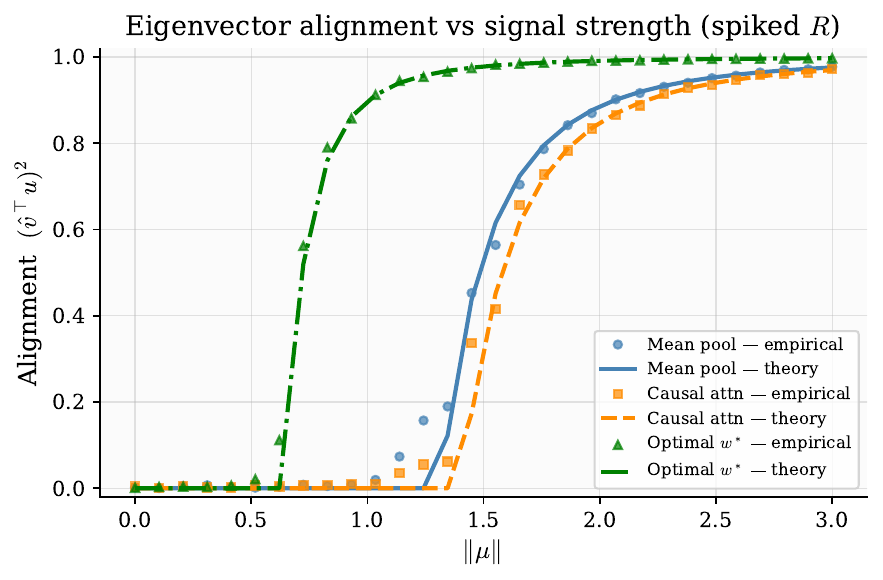}
        \caption{Spiked correlation ($T\!=\!20$, spike$\,=10$)}
        \label{fig:align_spiked}
    \end{subfigure}
    \caption{Eigenvector alignment $|\hat\vu_\mS^\top\vu|^2$ vs.\ $\|\vmu\|$ for three pooling strategies. Lines: theory~\eqref{eq:total_alignment}; markers: empirical averages. \textbf{(a)} Prefix model with optimal weights achieving the earliest transition (Theorem~\ref{thm:optimal}). \textbf{(b)} Spiked $\mR$ as described in Section \ref{sec:experiments}: the optimal weights (green) significantly outperform both mean pooling and causal attention, which here have almost similar performance since the signal is spread across multiple early positions.}
    \label{fig:alignment}
\end{figure}

\paragraph{Alignment with optimal weights (Figure~\ref{fig:alignment}).} We compare three strategies: mean pooling, causal attention, and optimal weights $\vw^\opt$ (Theorem~\ref{thm:optimal}). Figure~\ref{fig:align_prefix} uses the prefix model ($T=10$, $L=3$): the optimal weights achieve the earliest phase transition, followed by causal attention, then mean pooling, which is consistent with $\lambda_{\mathrm{max}}(\mR_S) > \alpha_\cau/\kappa_\cau > \alpha_\id/\kappa_\id$. Figure~\ref{fig:align_spiked} uses a spiked correlation matrix $\mR = \mI_T + \theta_R \vu_R\vu_R^\top$, $\theta_R = 10$ and $T=20$ where $\vu_R$ has nonzero entries only in positions $\{1,\ldots,5\}$ with a sign pattern. Here the optimal weights, which adapt to the specific structure of $\vu_R$, achieve dramatically better alignment than both mean pooling and causal attention, illustrating the result of Theorem~\ref{thm:optimal}.

\begin{figure}[t!]
    \centering
    \begin{subfigure}[b]{0.48\textwidth}
        \centering
        \includegraphics[width=\textwidth]{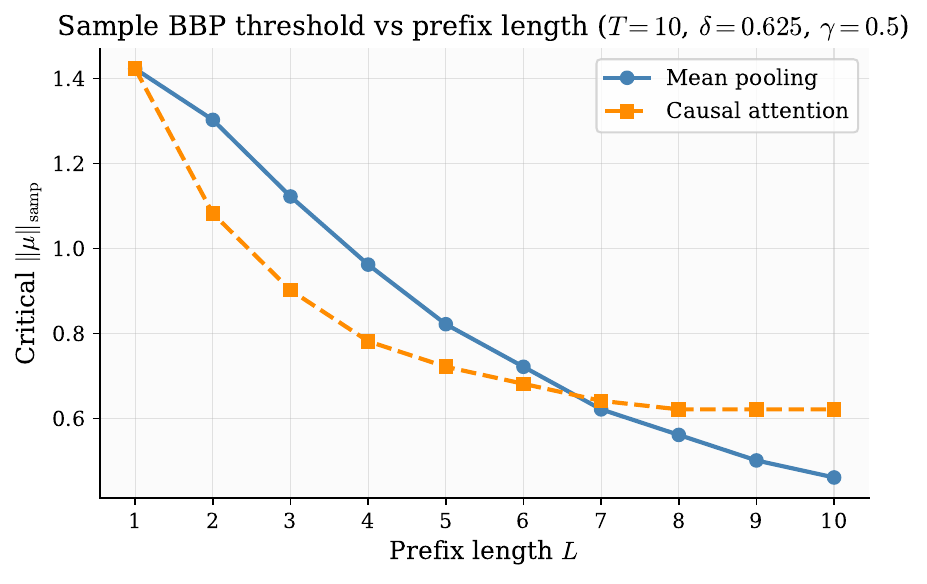}
        \caption{Sample BBP threshold vs.\ prefix length $L$}
        \label{fig:phase_L}
    \end{subfigure}
    \hfill
    \begin{subfigure}[b]{0.48\textwidth}
        \centering
        \includegraphics[width=\textwidth]{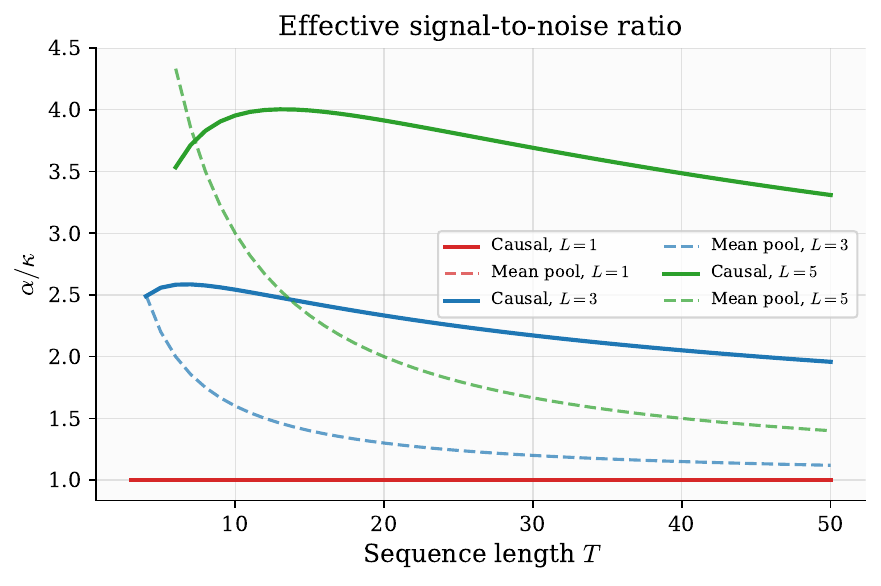}
        \caption{$\alpha/\kappa$ ratio vs.\ sequence length $T$}
        \label{fig:snr}
    \end{subfigure}
    \caption{\textbf{(a)} Critical $\|\vmu\|$ for signal detectability: causal attention has a lower threshold for small $L$, confirming its benefit when signal concentrates early. \textbf{(b)} Effective SNR $\alpha/\kappa$ vs.\ $T$: causal attention (solid) consistently exceeds mean pooling (dashed) for $L\geq 2$. The gap is stable as $T$ grows, while mean pooling's SNR decays as $1/T$.}
    \label{fig:phase_transitions}
\end{figure}

\paragraph{Phase transitions and SNR (Figure~\ref{fig:phase_transitions}).} Figure~\ref{fig:phase_L} shows the critical signal strength $\mu_\mathrm{samp}$ vs.\ prefix length $L$: causal attention has a strictly lower threshold for small $L$, with the gap largest at $L=1$. For large $L$ approaching $T$, the advantage diminishes as most positions become informative. Figure~\ref{fig:snr} shows $\alpha/\kappa$ vs.\ $T$ where causal attention maintains a stable SNR advantage that grows with $T$ for fixed $L$, while mean pooling's SNR decays as more noise positions dilute the signal.

\paragraph{Classification setting.} To illustrate the implications of optimal signal recovery on an ML example, we consider the following binary classification setup. We generate sequences according to the prefix model of Section~\ref{sec:statistical_model} with $d\!=\!300$, $V\!=\!500$, $N\!=\!800$, $T\!=\!10$, $L\!=\!3$, and assign each sequence a binary label $y_n = \mathrm{sign}(\vone_L^\top \xi_n)\in\{\pm 1\}$ with $\vy=(y_1, \ldots, y_N)\in\sR^N$, so that $y_n$ reflects the majority class of the $L$ signal-carrying positions. For each pooling strategy $\vw \in \{\vw^\id, \vw^\cau, \vw^\opt\}$ we build pooled representations $\vc_n = \sum_t w_t X_t^{(n)}\in \sR^d$ and fit a ridge-regularized linear classifier $\hat{\vbeta}_\lambda = (\mC^\top \mC + \lambda N \mI_d)^{-1}\mC^\top \vy$ with $\lambda=1$. We report test accuracy when varying $\|\vmu\|\in[0,5]$. We further consider a variant where $\vw$ is learned along the ridge classifier weights as described in Appendix \ref{appendix:classif}, which can be seen as a simplified setting of modern architectures where attention weights are also trainable parameters.

\begin{wrapfigure}{r}{0.42\textwidth}
    \vspace{-1.0em}
    \centering
    \includegraphics[width=0.40\textwidth]{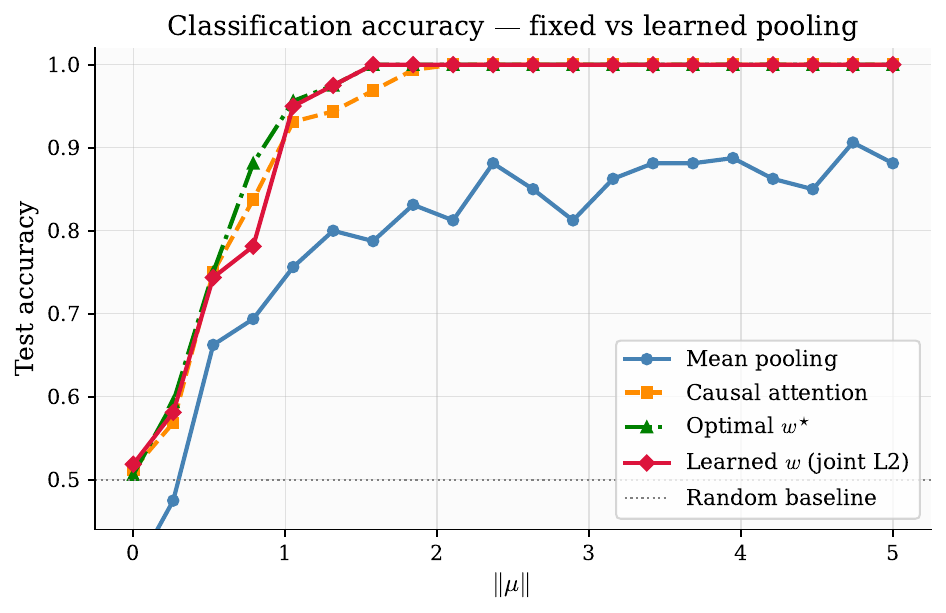}
    \caption{Ridge classification accuracy vs. $\|\vmu\|$ on the prefix model ($d=300$, $V=500$, $N=800$, $T=10$, $L=3$). The learned~$\vw$ baseline jointly minimizes the L2 loss of~\eqref{eq:joint_l2} in Appendix \ref{appendix:classif} over $(\vphi,\vbeta)$, with $\vw=\softmax(\vphi)$ and $\vbeta$ obtained in closed form at every step.}
    \label{fig:classification}
    \vspace{-0.8em}
\end{wrapfigure}

\paragraph{Downstream classification (Figure~\ref{fig:classification}).} Ridge classification accuracy follows the pattern predicted by our spectral analysis. The ordering optimal $>$ causal $>$ mean pooling is maintained across the full range of $\|\vmu\|$, suggesting that the SNR advantage identified by our theory translates qualitatively to downstream task performance, even though the classifier itself is outside the scope of our theorems.
A fully theoretical characterization of the test-accuracy curves in Figure~\ref{fig:classification} can be obtained with random matrix theory by building on the present spectral analysis and adapting the high-dimensional equivalent-resolvent machinery recently developed for Gaussian-mixture classification~\citep{demir2025asymptotic}; we leave this derivation as an interesting direction for future work.

\vspace{0.25em}
Taken together, the experiments above confirm the spectral results developed in Sections~\ref{sec:main}--\ref{sec:causal} and show that the SNR ranking $\lambda_\mathrm{max}(\mR)\ge \alpha_\cau/\kappa_\cau \ge \alpha_\id/\kappa_\id$ predicted by the theory is consistent, quantitatively, across all the considered settings. We close the paper with a summary of these findings and outline several directions in which our framework can naturally be further extended.

\section{Conclusion}\label{sec:conclusion}

We have presented an exact random matrix theory framework for understanding how attention-based pooling affects signal recovery in high-dimensional sequence models. The effective signal strength $\alpha(\vw) = \vw^\top\mR\vw$ and noise scale $\kappa(\vw) = \|\vw\|^2$ jointly determine two BBP-type phase transitions and the resulting eigenvector alignment. We proved that the optimal weights are the normalized top eigenvector of the correlation matrix $\mR$ yielding provably optimal signal recovery. 

\paragraph{Extensions and open problems.} Beyond the setting analyzed here, our framework naturally opens several directions that we believe the present results pave the way for. \textit{(i) Richer embedding models:} Our two-class Gaussian mixture can be extended to $K$-class mixtures. The RMT tools extend to multi-spike models~\citep{benaych2011eigenvalues}, though the algebra becomes more involved. \textit{(ii) Sample-dependent correlations:} In practice, $\mR$ varies across sequences. Extending our framework to $\mR^{(n)}$ that depends on the sample $n$ would require analyzing a mixture of covariance structures, connecting to the theory of mixtures of spiked models. \textit{(iii) Data-dependent attention:} Full self-attention produces weights $\vw^{(n)}$ that depend on the input sequence $\mX^{(n)}$. Analyzing this requires understanding the joint distribution of $(w^{(n)}, \mX^{(n)})$, which is considerably more challenging but could be approached via leave-one-out or cavity methods from statistical physics \citep{mezard2003cavity}. \textit{(iv) Multi-head and multi-layer architectures:} Extending to multiple heads (each with different effective $\alpha/\kappa$) and multiple layers (where the output of one layer feeds into the next) would connect to the theory of products of random matrices and iterated free convolutions. \textit{(v) Non-isotropic noise:} Replacing $\vz_v \sim \gN(\vzero, \mI_d)$ with $\vz_v \sim \gN(\vzero, \mSigma_0)$ for a general covariance $\mSigma_0$, which aligns more with our experiments in Appendix \ref{app:gpt2}, would modify the bulk law and phase transitions, requiring generalized RMT results \citep{silverstein1995empirical}. \textit{(vi) Downstream classification theory:} As discussed in Section~\ref{sec:experiments}, the test-accuracy curves of Figure~\ref{fig:classification} admit an explicit high-dimensional characterization via random matrix theory, building on the present spectral analysis in the spirit of~\citep{firdoussi2024high, demir2025asymptotic}. In each of these directions, the spectral identities derived in this paper can be further generalized and would bring new insights into the understanding of attention-based models.


\bibliographystyle{bib_style}
\bibliography{references}


\input{appendix}

\end{document}

%% file: math_commands.tex
\usepackage{amsmath,amsfonts,bm}

\def\eqref#1{equation~\ref{#1}}

\def\1{\bm{1}}

\def\vzero{{\bm{0}}}
\def\vone{{\bm{1}}}
\def\vmu{{\bm{\mu}}}

\def\vc{{\bm{c}}}

\def\ve{{\bm{e}}}

\def\vs{{\bm{s}}}

\def\vu{{\bm{u}}}
\def\vv{{\bm{v}}}
\def\vw{{\bm{w}}}
\def\vx{{\bm{x}}}
\def\vy{{\bm{y}}}
\def\vz{{\bm{z}}}
\def\vbeta{{\bm{\beta}}}

\def\mA{{\bm{A}}}

\def\mC{{\bm{C}}}

\def\mE{{\bm{E}}}

\def\mG{{\bm{G}}}

\def\mI{{\bm{I}}}

\def\mR{{\bm{R}}}
\def\mS{{\bm{S}}}

\def\mW{{\bm{W}}}
\def\mX{{\bm{X}}}

\def\mZ{{\bm{Z}}}

\def\mSigma{{\bm{\Sigma}}}

\DeclareMathAlphabet{\mathsfit}{\encodingdefault}{\sfdefault}{m}{sl}
\SetMathAlphabet{\mathsfit}{bold}{\encodingdefault}{\sfdefault}{bx}{n}

\def\gE{{\mathcal{E}}}

\def\gK{{\mathcal{K}}}

\def\gN{{\mathcal{N}}}

\def\sR{{\mathbb{R}}}

\newcommand{\E}{\mathbb{E}}

\newcommand{\softmax}{\mathrm{softmax}}



%% file: appendix.tex
\newpage
\appendix

\section{Derivation of the Bulk Cubic Equation (Theorem~\ref{thm:bulk} and Proposition~\ref{prop:edge})}\label{app:cubic}

We derive~\eqref{eq:cubic} using the $S$-transform formalism from free probability~\citep{nica2006lectures}, and then extract the right bulk edge $\lambda_+$ in closed form.

\paragraph{Step 0: Pooled representation and bulk reduction.}
Recall that each pooled row of $\mC$ can be written as
\[
    \vc_n = \sum_{t=1}^T w_t\, X_t^{(n)}
    \;=\; \Big(\sum_{t=1}^T w_t \xi_{n,t}\Big)\vmu + \sum_{t=1}^T w_t\,\vz_{x_{n,t}}.
\]
Using independence of $(\xi_n)$ and $(\vz_v)$ and the correlation structure $\E[\xi_n\xi_n^\top]=\mR$, the (conditional on $\mZ$) population covariance is
\begin{equation}\label{eq:Sigma_pop_app}
    \mSigma \;:=\; \E[\vc_n\vc_n^\top\mid \mZ]
    \;=\; \kappa(\vw)\,\mSigma_\mZ \;+\; \alpha(\vw)\,\|\vmu\|^2\,\vu\vu^\top,
\end{equation}
with $\kappa(\vw)=\|\vw\|^2$, $\alpha(\vw)=\vw^\top\mR\vw$, and $\mSigma_\mZ=(1/V)\mZ\mZ^\top$.
Because $\mSigma$ is a rank-one perturbation of $\kappa\mSigma_\mZ$, the limiting ESD of $\mS=(1/N)\mC\mC^\top$ coincides with that of the null sample covariance
\[
    \mS_0 \;:=\; \tfrac{1}{N}(\kappa\mSigma_\mZ)^{1/2}\mG\mG^\top(\kappa\mSigma_\mZ)^{1/2},
\]
where $\mG\in\sR^{d\times N}$ has i.i.d.\ standard Gaussian entries.

\paragraph{Step 1: Identifying the free multiplicative convolution.}
As $d,V\to\infty$ with $d/V\to\delta$, the ESD of $\mSigma_\mZ$ converges almost surely to the Marchenko--Pastur law $\mathrm{MP}_\delta$, hence the ESD of $\kappa\mSigma_\mZ$ converges to $\kappa\,\mathrm{MP}_\delta$. Likewise the ESD of $(1/N)\mG\mG^\top$ converges to $\mathrm{MP}_\gamma$. By standard free-probability arguments~\citep{nica2006lectures}, the two ensembles $(\kappa\mSigma_\mZ)$ and $(1/N)\mG\mG^\top$ are asymptotically free and the LSD of $\mS_0$ is the free multiplicative convolution
\begin{equation}\label{eq:bulk_free_conv_app}
    \mu_{\mathrm{bulk}} \;=\; \kappa\,\big(\mathrm{MP}_\delta \boxtimes \mathrm{MP}_\gamma\big).
\end{equation}
In particular, the bulk is \emph{not} an MP law unless $\delta=0$ (infinite vocabulary).

\paragraph{Step 2: $S$-transform computation.}
Recall that the $S$-transform of $\mathrm{MP}_a$ is $S_{\mathrm{MP}_a}(z)=1/(1+az)$ and that scaling a distribution by a factor $c>0$ multiplies the $S$-transform by $1/c$. Since $S$-transforms are multiplicative under $\boxtimes$,
\begin{equation}\label{eq:Stransform_F}
    S_F(z) \;=\; \frac{1}{\kappa\,(1+\delta z)(1+\gamma z)},
    \qquad F := \mu_{\mathrm{bulk}}.
\end{equation}

\paragraph{Step 3: From $S$-transform to Stieltjes transform.}
Let $m(z)$ be the Stieltjes transform of $F$ and define the auxiliary function
\[
    \psi(z) \;:=\; z m(z) - 1.
\]
The fundamental identity linking $S_F$ and $m$ is $S_F(\psi(z))\,\psi(z) = (1+\psi(z))/z$, equivalently
\begin{equation}\label{eq:Sfund}
    S_F(\psi(z)) \;=\; \frac{1+\psi(z)}{\psi(z)\cdot z} \;=\; \frac{m(z)}{zm(z)-1}.
\end{equation}
Substituting \eqref{eq:Stransform_F} with $\psi=zm-1$ into \eqref{eq:Sfund} yields
\[
    \frac{1}{\kappa\,(1+\delta(zm-1))(1+\gamma(zm-1))} \;=\; \frac{m}{zm-1}.
\]
Cross-multiplying gives $zm-1=\kappa m(1+\delta(zm-1))(1+\gamma(zm-1))$. Expanding,
\begin{align*}
    \kappa m\Big[1+(\delta+\gamma)(zm-1)+\delta\gamma(zm-1)^2\Big]
    &\;=\; zm-1.
\end{align*}
Collecting powers of $m$ and rearranging yields the cubic equation~\eqref{eq:cubic}:
\begin{equation}\label{eq:cubic_app}
    \delta\gamma\kappa\, z^2 m^3 \;-\; \kappa z(\delta+\gamma-2\delta\gamma)\,m^2 \;-\;\big(z+\kappa(\delta-1)(1-\gamma)\big)\,m \;-\; 1 \;=\; 0.
\end{equation}
The limiting density is recovered by $\rho(x)=\pi^{-1}\lim_{\eta\downarrow0}\Im\,m(x+i\eta)$.

\paragraph{Step 4: Bulk edges and the cubic $R(z)$.}
Let $P(m;z)$ denote the left-hand side of~\eqref{eq:cubic_app}, viewed as a polynomial of degree $3$ in $m$ with coefficients depending on $z$. The edges of the support of $\mu_{\mathrm{bulk}}$ are precisely the real $z$'s where the analytic Stieltjes branch $m(z)$ develops a square-root singularity, i.e.\ where $P(\,\cdot\,;z)$ has a \emph{double root} in $m$:
\[
    P(m;z)=0, \qquad \partial_m P(m;z)=0
    \quad\Longleftrightarrow\quad
    \mathrm{Disc}_m\!\big(P(\cdot;z)\big)=0.
\]
A direct computation (e.g.\ via the resultant of $P$ and $\partial_m P$) shows that the discriminant factorizes as
\begin{equation}\label{eq:disc_factorization}
    \mathrm{Disc}_m\big(P(\cdot;z)\big) \;=\; \kappa\,z^2\,R(z),
\end{equation}
where $R(z)$ is the following cubic in $z$:
\begin{align}
    R(z) \;=\;& 4\delta\gamma\, z^3 \nonumber\\
    &+ \kappa\,\Big(\delta^2\gamma^2 - 10\delta^2\gamma + \delta^2 - 10\delta\gamma^2 - 10\delta\gamma + \gamma^2\Big)\,z^2 \nonumber\\
    &- 2\kappa^2\,\Big(\delta^3\gamma^2 - 4\delta^3\gamma + \delta^3 + \delta^2\gamma^3 + 2\delta^2\gamma^2 + 2\delta^2\gamma + \delta^2 \nonumber\\
    &\qquad\qquad - 4\delta\gamma^3 + 2\delta\gamma^2 - 4\delta\gamma + \gamma^3 + \gamma^2\Big)\, z \nonumber\\
    &+ \kappa^3(\delta-1)^2(\gamma-1)^2(\delta-\gamma)^2.
    \label{eq:Rcubic}
\end{align}
The factor $\kappa z^2$ in~\eqref{eq:disc_factorization} corresponds to the hard edge at $z=0$, while the genuine (right) bulk edge is the largest real root of $R$.

\paragraph{Step 5: Scaling out $\kappa$ and closed-form edge.}
Because the measure is rescaled by $\kappa$, the edges scale linearly in $\kappa$. Writing $z=\kappa x$, \eqref{eq:Rcubic} becomes $R(z)=\kappa^3 r(x)$ with
\begin{align*}
    r(x) \;=\;& 4\delta\gamma\, x^3 \\
    &+ \Big(\delta^2\gamma^2 - 10\delta^2\gamma + \delta^2 - 10\delta\gamma^2 - 10\delta\gamma + \gamma^2\Big)\, x^2 \\
    &- 2\Big(\delta^3\gamma^2 - 4\delta^3\gamma + \delta^3 + \delta^2\gamma^3 + 2\delta^2\gamma^2 + 2\delta^2\gamma + \delta^2 \\
    &\qquad\qquad - 4\delta\gamma^3 + 2\delta\gamma^2 - 4\delta\gamma + \gamma^3 + \gamma^2\Big)\, x \\
    &+ (\delta-1)^2(\gamma-1)^2(\delta-\gamma)^2.
\end{align*}
Introducing the symmetric combinations $q:=\delta\gamma$ and $r_0:=\delta+\gamma+\delta\gamma$, $r(x)$ takes the form $Ax^3+Bx^2+Cx+D$ with $A=4q$ and $B=r_0^2-12 q r_0+12 q^2-12 q$. Applying the standard trigonometric cubic formula with invariants
\[
    \Delta_0 \;=\; B^2-3AC \;=\; r_0\big(r_0^{\,3}+216 q^2\big),
    \qquad
    \Delta_1 \;=\; 2B^3-9ABC+27A^2 D \;=\; 2\big(r_0^{\,6}-540 q^2 r_0^{\,3}-5832 q^4\big),
\]
and letting $\phi=\arccos\!\big(\Delta_1/(2\Delta_0^{3/2})\big)$, the three real roots are
\[
    x_k \;=\; -\frac{1}{3A}\bigg(B + 2\sqrt{\Delta_0}\,\cos\!\Big(\tfrac{\phi+2\pi k}{3}\Big)\bigg),\qquad k=0,1,2.
\]
The right bulk edge of $\mu_{\mathrm{bulk}}$ is
\[
    \lambda_+ \;=\; \kappa\,x_+, \qquad x_+ := \max\{x_0,x_1,x_2\},
\]
which is exactly the expression stated in Proposition~\ref{prop:edge}.

\section{Population Outlier and Overlap (Theorem~\ref{thm:pop_bbp})}\label{app:pop}

Throughout this section let $H$ denote the LSD of $\kappa\mSigma_\mZ$, namely $H=\kappa\,\mathrm{MP}_\delta$, and $m_H$ its Stieltjes transform. The population covariance~\eqref{eq:Sigma_pop_app} is the rank-one additive deformation
\[
    \mSigma \;=\; \kappa\mSigma_\mZ \;+\; \alpha\|\vmu\|^2\,\vu\vu^\top.
\]

\paragraph{Outlier equation.}
By~\citet{benaych2011eigenvalues}, if an outlier eigenvalue $\beta$ of $\mSigma$ lies outside the support of $H$, it must satisfy
\begin{equation}\label{eq:pop_BGN}
    1 \;+\; \alpha\|\vmu\|^2\, m_H(\beta) \;=\; 0
    \quad\Longleftrightarrow\quad
    m_H(\beta) \;=\; -\frac{1}{\alpha\|\vmu\|^2}.
\end{equation}
The rescaling identity $m_H(\beta)=\kappa^{-1} m_{\mathrm{MP}_\delta}(\beta/\kappa)$ turns this into $m_{\mathrm{MP}_\delta}(\beta/\kappa) = -\kappa/(\alpha\|\vmu\|^2) = -1/\rho$, where $\rho := \alpha\|\vmu\|^2/\kappa$ is the population spike ratio.

\paragraph{Solving the MP quadratic.}
The Stieltjes transform of $\mathrm{MP}_\delta$ satisfies
\begin{equation}\label{eq:MP_quadratic}
    \delta\, z\, m^2 \;+\; (z+\delta-1)\,m \;+\; 1 \;=\; 0.
\end{equation}
Setting $m=-1/\rho$ and multiplying through by $\rho^2$:
\[
    \delta z \;-\; (z+\delta-1)\rho \;+\; \rho^2 \;=\; 0
    \quad\Longrightarrow\quad
    z \;=\; \frac{\rho(\rho-\delta+1)}{\rho-\delta}.
\]
Multiplying by $\kappa$ yields the population outlier location
\begin{equation}\label{eq:beta_out_app}
    \beta_\mathrm{out} \;=\; \kappa\,\frac{\rho(\rho-\delta+1)}{\rho-\delta}.
\end{equation}

\paragraph{BBP condition.}
For~\eqref{eq:beta_out_app} to be an actual outlier of $H=\kappa\mathrm{MP}_\delta$, we need $\beta_\mathrm{out}/\kappa>(1+\sqrt\delta)^2$, i.e.
\[
    \frac{\rho(\rho-\delta+1)}{\rho-\delta} \;>\; (1+\sqrt\delta)^2.
\]
Clearing the denominator (for $\rho>\delta$) and expanding gives $\rho^2 - (\delta+2\sqrt\delta)\rho - (\sqrt\delta)(\delta+\sqrt\delta) > 0$, which factors as $(\rho-(\delta+\sqrt\delta))(\rho+\sqrt\delta)>0$. Since $\rho>0$ the BBP condition reduces to
\[
    \rho \;>\; \delta+\sqrt\delta.
\]

\paragraph{Eigenvector overlap.}
The general formula of~\citet{benaych2011eigenvalues} reads
\begin{equation}\label{eq:BGN_overlap}
    |\hat\vu_\mSigma^\top\vu|^2 \;=\; \frac{1}{\alpha^2\|\vmu\|^4\, m_H'(\beta_\mathrm{out})}.
\end{equation}
We compute $m_H'$ by differentiating \eqref{eq:MP_quadratic} implicitly. Writing $G(z,m):=\delta z m^2+(z+\delta-1)m+1=0$,
\[
    m'(z) \;=\; -\frac{\partial_z G}{\partial_m G}
    \;=\; -\frac{\delta m^2 + m}{2\delta z m + (z+\delta-1)}.
\]
At $(z_\star,m_\star)=(\beta_\mathrm{out}/\kappa, -1/\rho)$, the numerator is $\delta/\rho^2 - 1/\rho = (\delta-\rho)/\rho^2$, and using~\eqref{eq:MP_quadratic} at this point to express the denominator: from $G(z_\star,m_\star)=0$ we get $\delta z_\star/\rho^2 - (z_\star+\delta-1)/\rho + 1 = 0$, which after algebraic manipulation gives $2\delta z_\star m_\star + (z_\star+\delta-1) = -\delta z_\star/\rho - 1/m_\star = \rho - \delta z_\star/\rho$. Using $z_\star=\rho(\rho-\delta+1)/(\rho-\delta)$, a direct computation yields
\[
    m_{\mathrm{MP}_\delta}'(z_\star) \;=\; \frac{1}{\rho^2}\cdot \frac{(\rho-\delta)^2}{(\rho-\delta)^2-\delta}.
\]
Translating back through $m_H'(\beta_\mathrm{out}) = \kappa^{-2} m_{\mathrm{MP}_\delta}'(\beta_\mathrm{out}/\kappa)$ and substituting in~\eqref{eq:BGN_overlap} with $\alpha^2\|\vmu\|^4=\kappa^2\rho^2$:
\begin{equation}\label{eq:overlap_pop_app}
    |\hat\vu_\mSigma^\top\vu|^2 \;=\; 1 - \frac{\delta}{(\rho-\delta)^2}.
\end{equation}
At the threshold $\rho=\delta+\sqrt\delta$, $(\rho-\delta)^2=\delta$ and~\eqref{eq:overlap_pop_app} equals zero, confirming the phase transition.

\section{Sample Outlier Quadratic (Theorem~\ref{thm:sample_spike})}\label{app:sample_spike}

Assume we are in the supercritical regime, so that $\mSigma$ has an outlier eigenvalue $\beta:=\beta_\mathrm{out}$ separated from the bulk. Viewing $\mSigma$ as a rank-one deformation of its bulk component, the standard theory of spiked sample covariance matrices (\citealp{silverstein1995empirical,bai2010spectral,benaych2011eigenvalues}) states that an outlier $\lambda_\mathrm{out}$ of $\mS=(1/N)\mC\mC^\top$ is characterized by the \emph{companion outlier equation}
\begin{equation}\label{eq:outlier_eq_app}
    \underline{m}(\lambda_\mathrm{out}) \;=\; -\frac{1}{\beta},
\end{equation}
where $\underline{m}(z) = -(1-\gamma)/z + \gamma m(z)$ is the companion Stieltjes transform (defined after Theorem~\ref{thm:bulk}).

\paragraph{Step 1: Linearizing $m$ at the outlier.}
Equation~\eqref{eq:outlier_eq_app} gives a linear relation between $m$ and $\lambda$ at the outlier:
\begin{equation}\label{eq:m_out_app}
    m_\mathrm{out} \;:=\; m(\lambda_\mathrm{out}) \;=\; \frac{1-\gamma}{\gamma\,\lambda_\mathrm{out}} \;-\; \frac{1}{\gamma\beta}.
\end{equation}

\paragraph{Step 2: Substitution into the cubic.}
Plugging $m=m_\mathrm{out}$ and $z=\lambda:=\lambda_\mathrm{out}$ into the bulk cubic~\eqref{eq:cubic_app}:
\[
    \delta\gamma\kappa\,\lambda^2 m_\mathrm{out}^3 \;-\; \kappa\lambda(\delta+\gamma-2\delta\gamma)\,m_\mathrm{out}^2 \;-\; (\lambda+\kappa(\delta-1)(1-\gamma))\,m_\mathrm{out} \;-\; 1 \;=\; 0.
\]
Let us denote $a:= (1-\gamma)/\gamma$ and $b := 1/(\gamma\beta)$, so $m_\mathrm{out} = a/\lambda - b$. Multiplying the equation above by $\lambda^2$ clears $m_\mathrm{out}$ denominators, yielding a polynomial equation in $\lambda$. Expanding and collecting:
\begin{itemize}
    \item The cubic term gives $\delta\gamma\kappa\,(a-b\lambda)^3$, which contributes $-\delta\gamma\kappa b^3\lambda^3 + 3\delta\gamma\kappa a b^2\lambda^2 - 3\delta\gamma\kappa a^2 b\lambda + \delta\gamma\kappa a^3$.
    \item The quadratic term gives $-\kappa\lambda(\delta+\gamma-2\delta\gamma)(a-b\lambda)^2/\lambda^0 = -\kappa(\delta+\gamma-2\delta\gamma)\lambda(a-b\lambda)^2$.
    \item The linear term gives $-(\lambda+\kappa(\delta-1)(1-\gamma))\cdot\lambda(a-b\lambda)$.
    \item The constant term $-\lambda^2$.
\end{itemize}
Combining the above, an overall factor of $\lambda$ appears in the resulting polynomial expression (corresponding to the trivial root $\lambda=0$). After dividing by $\lambda$ and regrouping, the remaining quadratic in $\lambda$ is exactly~\eqref{eq:lambda_quad}:
\begin{equation}\label{eq:lambda_quad_app}
    \delta\kappa\,\lambda^2 \;+\; \big(-\beta^2\gamma + \beta\kappa(\gamma(1+\delta) - 2\delta)\big)\,\lambda \;+\; \beta^2\big(\beta\gamma + \kappa(1-\gamma)(\delta-\gamma)\big) \;=\; 0.
\end{equation}
Writing $A_\lambda=\delta\kappa$, $B_\lambda=-\beta^2\gamma+\beta\kappa(\gamma(1+\delta)-2\delta)$, $C_\lambda=\beta^2(\beta\gamma+\kappa(1-\gamma)(\delta-\gamma))$, the two roots are
\[
    \lambda_{\pm} \;=\; \frac{-B_\lambda\pm\sqrt{B_\lambda^2-4A_\lambda C_\lambda}}{2A_\lambda}.
\]

\paragraph{Step 3: Selecting the physical root.}
The sample outlier equals the larger root, $\lambda_\mathrm{out}=\lambda_+$, whenever $\lambda_+$ exceeds the bulk edge $\lambda_+^{\mathrm{bulk}}$ of Appendix~\ref{app:cubic}. Equivalently, $\beta$ must exceed the sample BBP threshold $\beta_\mathrm{crit} = -1/\underline{m}(\lambda_+^{\mathrm{bulk},+})$. Below $\beta_\mathrm{crit}$, equation~\eqref{eq:outlier_eq_app} has no real solution in the supercritical branch, and no outlier separates from the bulk.

\section{Sample Eigenvector Overlap (Theorem~\ref{thm:sample_overlap})}\label{app:sample_overlap}

This section contains the main ingredient: a self-contained proof of the key identity
\begin{equation}\label{eq:overlap_sample_goal}
    |\hat\vu_\mS^\top\,\hat\vu_\mSigma|^2 \;=\; \frac{1}{\beta\,\lambda_\mathrm{out}\,\underline{m}'(\lambda_\mathrm{out})}.
\end{equation}
The crucial $1/\lambda_\mathrm{out}$ factor (absent in additive/Wigner deformations) arises from differentiating the \emph{multiplicative} structure of the sample covariance.

\subsection{A differential identity: overlap as \texorpdfstring{$(\beta/\lambda)\,d\lambda/d\beta$}{(beta/lambda) dlambda/dbeta}}

Fix a realization of the Wishart noise
\[
    \mW \;:=\; \frac{1}{N}\mG^\top\mG \in \sR^{d\times d},
    \qquad G_{ij}\stackrel{\mathrm{i.i.d.}}{\sim}\gN(0,1),
\]
and consider the sample covariance
\[
    \mS(\beta) \;=\; \mSigma(\beta)^{1/2}\mW\,\mSigma(\beta)^{1/2},
\]
where we parametrize the population covariance so that its top eigenpair is frozen:
\begin{equation}\label{eq:Sigma_beta_decomp}
    \mSigma(\beta) \;=\; \beta\,\hat\vu_\mSigma\hat\vu_\mSigma^\top \;+\; \mSigma_\perp,
    \qquad \hat\vu_\mSigma^\top\mSigma_\perp=\vzero, \qquad \|\hat\vu_\mSigma\|=1.
\end{equation}
Here $\mSigma_\perp$ gathers the remaining eigen-data and does \emph{not} depend on $\beta$. Let $\lambda(\beta)$ be a simple outlier eigenvalue of $\mS(\beta)$ with associated unit eigenvector $\hat\vu_\mS(\beta)$.

\begin{lemma}[Overlap/derivative identity]\label{lem:overlap_derivative}
    Under~\eqref{eq:Sigma_beta_decomp},
    \begin{equation}\label{eq:overlap_derivative}
        |\hat\vu_\mS(\beta)^\top\,\hat\vu_\mSigma|^2
        \;=\; \frac{\beta}{\lambda(\beta)}\,\frac{d\lambda(\beta)}{d\beta}.
    \end{equation}
\end{lemma}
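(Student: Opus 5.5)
The plan is to apply a first-order (Hellmann--Feynman) perturbation argument to the symmetric matrix $\mS(\beta)=\mSigma(\beta)^{1/2}\mW\mSigma(\beta)^{1/2}$, and then use the eigen-equation to rewrite the resulting derivative.

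\textbf{Step 1: the square root is affine in $\sqrt\beta$.} Write $\mP:=\hat\vu_\mSigma\hat\vu_\mSigma^\top$. By~\eqref{eq:Sigma_beta_decomp}, $\mP$ and $\mSigma_\perp$ act on orthogonal subspaces, and $\mSigma_\perp$ is positive semidefinite, being the remaining part of a covariance. Hence
\[
\mSigma(\beta)^{1/2}=\sqrt\beta\,\mP+\mSigma_\perp^{1/2},
\qquad
\frac{d}{d\beta}\mSigma(\beta)^{1/2}=\frac{1}{2\sqrt\beta}\,\mP .
\]
We also have $\mP\,\mSigma(\beta)^{1/2}=\sqrt\beta\,\mP$.

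\textbf{Step 2: Hellmann--Feynman.} Since $\lambda(\beta)$ is a simple eigenvalue of the symmetric, smoothly parametrized $\mS(\beta)$, it is differentiable with a differentiable unit eigenvector $\hat\vu_\mS$. Differentiating $\hat\vu_\mS^\top\mS\hat\vu_\mS$ and using $\hat\vu_\mS^\top\hat\vu_\mS'=0$ gives $\lambda'=\hat\vu_\mS^\top\mS'\hat\vu_\mS$, where
\[
\mS'=\frac{1}{2\sqrt\beta}\big(\mP\mW\mSigma^{1/2}+\mSigma^{1/2}\mW\mP\big).
\]
By symmetry of $\mW$ and $\mSigma^{1/2}$, the two terms give equal contributions, so
\[
\lambda'=\frac{1}{\sqrt\beta}\,(\hat\vu_\mSigma^\top\hat\vu_\mS)\,\big(\hat\vu_\mSigma^\top\mW\mSigma^{1/2}\hat\vu_\mS\big).
\]

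\textbf{Step 3: eliminate the unknown factor.} Start from $\mSigma^{1/2}\mW\mSigma^{1/2}\hat\vu_\mS=\lambda\hat\vu_\mS$ and left-multiply by $\hat\vu_\mSigma^\top$. Using $\hat\vu_\mSigma^\top\mSigma^{1/2}=\sqrt\beta\,\hat\vu_\mSigma^\top$ from Step 1, this gives
\[
\sqrt\beta\,\hat\vu_\mSigma^\top\mW\mSigma^{1/2}\hat\vu_\mS=\lambda\,\hat\vu_\mSigma^\top\hat\vu_\mS .
\]
Substituting into Step 2 yields $\lambda'=(\lambda/\beta)\,(\hat\vu_\mSigma^\top\hat\vu_\mS)^2$, which is~\eqref{eq:overlap_derivative}.

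\textbf{Main obstacle.} The main point needing care is justifying the differentiability in Step 2. This relies on $\lambda(\beta)$ being simple, which is assumed, together with the analytic perturbation theory of symmetric matrices (Kato). A second point is the commuting structure in Step 1: it is exactly what makes $d\mSigma^{1/2}/d\beta$ proportional to $\mP$, and it is what makes the multiplicative factor $1/\lambda$ appear.
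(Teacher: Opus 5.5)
Your proposal is correct and follows the paper's proof essentially line for line. Both arguments use the same three ingredients: the affine-in-$\sqrt\beta$ square root $\mSigma^{1/2}=\sqrt\beta\,\hat\vu_\mSigma\hat\vu_\mSigma^\top+\mSigma_\perp^{1/2}$, first-order (Hellmann--Feynman) perturbation for the simple eigenvalue, and left-multiplying the eigen-equation by $\hat\vu_\mSigma^\top$ to eliminate $\hat\vu_\mSigma^\top\mW\mSigma^{1/2}\hat\vu_\mS$. Your added remarks on the positive semidefiniteness of $\mSigma_\perp$ and on differentiability via Kato's theory are harmless refinements that the paper leaves implicit.
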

\begin{proof}
In the eigenbasis of~\eqref{eq:Sigma_beta_decomp}, $\mSigma(\beta)^{1/2}=\sqrt\beta\,\hat\vu_\mSigma\hat\vu_\mSigma^\top + \mSigma_\perp^{1/2}$ and so
\[
    \frac{d}{d\beta}\mSigma(\beta)^{1/2} \;=\; \frac{1}{2\sqrt\beta}\,\hat\vu_\mSigma\hat\vu_\mSigma^\top.
\]
Differentiating $\mS(\beta)=\mSigma^{1/2}\mW\mSigma^{1/2}$,
\[
    \frac{d\mS}{d\beta} \;=\; \frac{d\mSigma^{1/2}}{d\beta}\mW\mSigma^{1/2} + \mSigma^{1/2}\mW\frac{d\mSigma^{1/2}}{d\beta}.
\]
For a simple eigenpair, standard first-order perturbation gives $d\lambda/d\beta = \hat\vu_\mS^\top (d\mS/d\beta)\hat\vu_\mS$. Substituting and using symmetry of $\mW$:
\[
    \frac{d\lambda}{d\beta} \;=\; \frac{1}{\sqrt\beta}\,\big(\hat\vu_\mSigma^\top\hat\vu_\mS\big)\,\hat\vu_\mSigma^\top\mW\mSigma^{1/2}\hat\vu_\mS.
\]
Now left-multiply the eigenequation $\mS\hat\vu_\mS=\lambda\hat\vu_\mS$ by $\hat\vu_\mSigma^\top$ and use $\hat\vu_\mSigma^\top\mSigma^{1/2}=\sqrt\beta\,\hat\vu_\mSigma^\top$:
\[
    \sqrt\beta\,\hat\vu_\mSigma^\top\mW\mSigma^{1/2}\hat\vu_\mS \;=\; \lambda\,\hat\vu_\mSigma^\top\hat\vu_\mS,
    \qquad\text{i.e.}\qquad
    \hat\vu_\mSigma^\top\mW\mSigma^{1/2}\hat\vu_\mS \;=\; \frac{\lambda}{\sqrt\beta}\,\hat\vu_\mSigma^\top\hat\vu_\mS.
\]
Plugging back,
\[
    \frac{d\lambda}{d\beta} \;=\; \frac{\lambda}{\beta}\,\big(\hat\vu_\mSigma^\top\hat\vu_\mS\big)^2,
\]
which rearranges to~\eqref{eq:overlap_derivative}.
\end{proof}

\subsection{Differentiating the outlier equation}

In the high-dimensional limit, the outlier $\lambda_\mathrm{out}$ produced by population spike $\beta$ is characterized by~\eqref{eq:outlier_eq_app}. Implicit differentiation of $\underline{m}(\lambda_\mathrm{out}(\beta))=-1/\beta$ with respect to $\beta$ gives
\begin{equation}\label{eq:dlambda_dbeta}
    \underline{m}'(\lambda_\mathrm{out})\,\frac{d\lambda_\mathrm{out}}{d\beta} \;=\; \frac{1}{\beta^2}
    \;\Longrightarrow\;
    \frac{d\lambda_\mathrm{out}}{d\beta} \;=\; \frac{1}{\beta^2\,\underline{m}'(\lambda_\mathrm{out})}.
\end{equation}
Substituting~\eqref{eq:dlambda_dbeta} into Lemma~\ref{lem:overlap_derivative} yields~\eqref{eq:overlap_sample_goal}.

\subsection{Explicit computation of \texorpdfstring{$\underline{m}'(\lambda_\mathrm{out})$}{m'(lambda\_out)}}

From $\underline{m}(z)=-(1-\gamma)/z+\gamma m(z)$ we obtain
\begin{equation}\label{eq:umprime_app}
    \underline{m}'(z) \;=\; \frac{1-\gamma}{z^2} + \gamma\,m'(z).
\end{equation}
To compute $m'(\lambda_\mathrm{out})$, write the bulk cubic~\eqref{eq:cubic_app} as $F(\lambda,m)=0$. Implicit differentiation gives $m'(\lambda) = -\partial_\lambda F(\lambda,m)/\partial_m F(\lambda,m)$ with
\begin{align}
    \partial_m F(\lambda,m) &\;=\; 3\delta\gamma\kappa\,\lambda^2 m^2 \;-\; 2\kappa\lambda(\delta+\gamma-2\delta\gamma)\,m \;-\; \big(\lambda+\kappa(\delta-1)(1-\gamma)\big), \label{eq:Fm_app}\\
    \partial_\lambda F(\lambda,m) &\;=\; 2\delta\gamma\kappa\,\lambda m^3 \;-\; \kappa(\delta+\gamma-2\delta\gamma)\,m^2 \;-\; m. \label{eq:Fl_app}
\end{align}
Evaluating at $(\lambda_\mathrm{out},m_\mathrm{out})$ with $m_\mathrm{out}$ given by~\eqref{eq:m_out_app} produces explicit values for $m'(\lambda_\mathrm{out})$ and, via~\eqref{eq:umprime_app}, for $\underline{m}'(\lambda_\mathrm{out})$. Substituting the latter into~\eqref{eq:overlap_sample_goal} yields $|\hat\vu_\mS^\top\hat\vu_\mSigma|^2$ in closed form, completing the proof of Theorem~\ref{thm:sample_overlap}.

\subsection{Consistency check: \texorpdfstring{$\delta\to 0$}{delta -> 0} reduces to Paul's formula}

When $\delta\to 0$ (infinite vocabulary), $\mSigma_\mZ\to \mI_d$ and the bulk reduces to the scaled MP law $\kappa\,\mathrm{MP}_\gamma$. In this limit, the companion transform admits a closed form and the outlier location simplifies to $\lambda_\mathrm{out}=\beta(1+\gamma/(\beta-1))$. A direct substitution into~\eqref{eq:overlap_sample_goal} recovers the classical formula of~\citet{paul2007asymptotics}:
\[
    |\hat\vu_\mS^\top\hat\vu_\mSigma|^2 \;=\; \frac{1-\gamma/(\beta-1)^2}{1+\gamma/(\beta-1)}.
\]

\section{Phase Transition Thresholds (Section~\ref{sec:phase})}\label{app:phase}

Recall the population spike ratio $\rho=\alpha(\vw)\|\vmu\|^2/\kappa(\vw)$ and the population outlier location~\eqref{eq:beta_out_app}.

\paragraph{Population BBP threshold.}
From Appendix~\ref{app:pop}, an outlier appears iff $\rho>\delta+\sqrt\delta$. Expressing this in terms of $\|\vmu\|$,
\begin{equation}\label{eq:mu_pop_app}
    \|\vmu\| \;>\; \mu_\mathrm{pop}(\vw) \;:=\; \sqrt{\frac{\kappa(\vw)(\delta+\sqrt\delta)}{\alpha(\vw)}}.
\end{equation}

\paragraph{Sample BBP threshold.}
Even when $\mSigma$ has an outlier $\beta_\mathrm{out}$, the sample covariance exhibits a separated outlier only if $\beta_\mathrm{out}$ exceeds the sample critical value
\begin{equation}\label{eq:beta_crit_app}
    \beta_\mathrm{crit} \;:=\; -\frac{1}{\underline{m}(\lambda_+^{+})},
\end{equation}
where $\lambda_+$ is the right edge of the bulk LSD~\eqref{eq:bulk_free_conv_app}. The critical ratio $\rho_\mathrm{samp}$ satisfying $\beta_\mathrm{out}(\rho_\mathrm{samp})=\beta_\mathrm{crit}$ is obtained by multiplying~\eqref{eq:beta_out_app} by $(\rho-\delta)$ and rearranging:
\[
    \kappa\rho(\rho-\delta+1) \;=\; \beta_\mathrm{crit}(\rho-\delta)
    \;\Longleftrightarrow\;
    \kappa\rho^2 + \big(\kappa(1-\delta)-\beta_\mathrm{crit}\big)\rho + \beta_\mathrm{crit}\delta \;=\; 0.
\]
Among the two roots, $\rho_\mathrm{samp}$ is the one with $\rho_\mathrm{samp}>\delta+\sqrt\delta$. The corresponding $\|\vmu\|$-threshold is
\begin{equation}\label{eq:mu_samp_app}
    \mu_\mathrm{samp}(\vw) \;=\; \sqrt{\frac{\kappa(\vw)\,\rho_\mathrm{samp}}{\alpha(\vw)}}.
\end{equation}
Both $\mu_\mathrm{pop}$ and $\mu_\mathrm{samp}$ are decreasing functions of the effective SNR ratio $\alpha(\vw)/\kappa(\vw)$, which confirms that maximizing $\alpha/\kappa$ uniformly lowers both detectability thresholds.

\section{Optimal Attention Weights (Theorem~\ref{thm:optimal})}\label{app:optimal}

\paragraph{Setup.}
Since $\mR$ is a correlation matrix, it is symmetric (and positive semi-definite with unit diagonal). The objective is the Rayleigh quotient
\[
    f(\vw) \;:=\; \frac{\vw^\top \mR\vw}{\vw^\top\vw},
\]
subject to the affine constraint $\vone^\top\vw = 1$.

\paragraph{Scale invariance.}
For any $\vx \neq \vzero$ with $\vone^\top\vx \neq 0$, set $\vw = \vx/(\vone^\top\vx)$. Then $\vone^\top\vw = 1$ and
\[
    f(\vw) \;=\; \frac{\vx^\top\mR\vx}{\vx^\top\vx} \;=\; f(\vx).
\]
So the constrained problem $\max\{f(\vw) : \vone^\top\vw = 1\}$ is equivalent to maximizing the unconstrained Rayleigh quotient over directions $\vx$ with $\vone^\top\vx\neq 0$.

\paragraph{Rayleigh--Ritz bound.}
By the Rayleigh--Ritz theorem, $f(\vx) \leq \lambda_{\mathrm{max}}(\mR)$ for all $\vx\neq\vzero$, with equality iff $\vx \in \gE_{\mathrm{max}}$, the eigenspace associated to $\lambda_{\mathrm{max}}(\mR)$. Therefore $\sup_{\vone^\top\vw=1}f(\vw) = \lambda_{\mathrm{max}}(\mR)$.

\paragraph{Attainability.}
A maximizer $\vw^\star$ exists iff there exists $\vv \in \gE_{\mathrm{max}}$ with $\vone^\top\vv\neq 0$, in which case $\vw^\star = \vv/(\vone^\top\vv)$ achieves $f(\vw^\star) = \lambda_{\mathrm{max}}(\mR)$. If $\vone \perp \gE_{\mathrm{max}}$, the supremum is not attained (a maximizing sequence $\vw_k$ has $\|\vw_k\|\to\infty$). When $\lambda_{\mathrm{max}}(\mR)$ is simple, $\gE_{\mathrm{max}} = \mathrm{span}\{\vv_{\mathrm{max}}\}$, so the optimizer is unique up to the sign fixed by normalization.

\section{Deterministic Weight Limit (Lemma~\ref{lem:det_weights})}\label{app:det_weights}

Consider parameter-free causal self-attention with $Q=K=V=\mX$, raw scores $S_{t,s}^\mathrm{raw} = \frac{\tau}{d}\langle X_t, X_s\rangle$, causal mask ($S_{t,s}\leftarrow -\infty$ for $s>t$), no-self mask ($S_{t,t}\leftarrow -\infty$, except the degenerate first row, where $S_{1,1}:=0$). Let $\gK_t$ be the admissible key set at row $t$: $\gK_1=\{1\}$, $\gK_t=\{1,\ldots,t-1\}$ for $t\geq 2$, $k_t:=|\gK_t|$.

\paragraph{Step 1: Admissible scores are $O_P(d^{-1/2})$.}
Under~\eqref{eq:embedding}, for distinct positions $t\neq s$:
\[
    \langle X_t, X_s\rangle \;=\; s_{x_t}s_{x_s}\|\vmu\|^2 + s_{x_t}\vmu^\top\vz_{x_s} + s_{x_s}\vz_{x_t}^\top\vmu + \vz_{x_t}^\top\vz_{x_s}.
\]
With $\vz_v\sim\gN(\vzero,\mI_d)$ i.i.d.\ and $\|\vmu\|=O(1)$, the four terms are $O(1)$, $O_P(\sqrt d)$, $O_P(\sqrt d)$, $O_P(\sqrt d)$ respectively. The dominant term is $\vz_{x_t}^\top\vz_{x_s}=O_P(\sqrt d)$. Therefore, under the $\tau/d$ scaling, $S_{t,s}^\mathrm{raw}=O_P(d^{-1/2})$ uniformly over admissible $(t,s)$ pairs since $T=O(1)$.

\paragraph{Step 2: Row-wise Taylor expansion of the softmax.}
Fix row $t$ and let $\vs^{(t)}\in\sR^{k_t}$ denote the vector of admissible scores. For $\|\vs^{(t)}\|_\infty$ small,
\begin{equation}\label{eq:softmax_taylor_app}
    \softmax(\vs^{(t)}) \;=\; \frac{1}{k_t}\vone \;+\; \frac{1}{k_t}\Big(\vs^{(t)} - \bar{s}_t\,\vone\Big) \;+\; O\!\big(\|\vs^{(t)}\|_\infty^{\,2}\big),
    \qquad \bar s_t \;:=\; \frac{1}{k_t}\vone^\top\vs^{(t)}.
\end{equation}
This expansion is obtained by writing $\softmax(\vs)_i=e^{s_i}/\sum_j e^{s_j}$ and Taylor-expanding at $\vs=\vzero$.

\paragraph{Step 3: Row-then-column averaging preserves the $O_P(d^{-1/2})$ error.}
From~\eqref{eq:softmax_taylor_app}, each row of $\mA$ differs from its uniform counterpart by $O_P(d^{-1/2})$ in $\ell_\infty$, hence in $\ell_1$ since $k_t\le T=O(1)$. The pooled attention weight vector is $\vw^\mathrm{att}=T^{-1}\mA^\top\vone$, a finite convex combination of rows. Averaging preserves the $O_P(d^{-1/2})$ error (because $T$ is fixed), so
\[
    \vw^\mathrm{att} \;=\; \vw^{(0)} + O_P(d^{-1/2}),
\]
where $\vw^{(0)}$ is obtained by replacing each admissible row by its uniform distribution on $\gK_t$.

\section{Harmonic Weights and Causal Scalars (Proposition~\ref{prop:harmonic})}\label{app:harmonic}

\paragraph{Proof of Proposition~\ref{prop:harmonic}.}
In the score-free limit of Appendix~\ref{app:det_weights}, the causal attention matrix is
\[
    A_{1,1}^{(0)}=1,\qquad A_{t,s}^{(0)}=\frac{1}{t-1}\ \text{ for } t\geq 2,\ s\leq t-1,\qquad A_{t,s}^{(0)}=0\ \text{otherwise}.
\]
The pooled weight at position $s$ is $w_s^{(0)}=T^{-1}\sum_{t=1}^T A_{t,s}^{(0)}$. For $s=1$, every row $t\geq 2$ contributes $1/(t-1)$ to key $1$ and row $t=1$ contributes $1$:
\[
    w_1^{(0)} \;=\; \frac{1}{T}\Big(1+\sum_{t=2}^T\frac{1}{t-1}\Big) \;=\; \frac{1+\mathcal{H}_{T-1}}{T}.
\]
For $s\geq 2$, key $s$ only appears for rows $t\geq s+1$ (it is masked out of earlier rows):
\[
    w_s^{(0)} \;=\; \frac{1}{T}\sum_{t=s+1}^T \frac{1}{t-1} \;=\; \frac{1}{T}\sum_{k=s}^{T-1}\frac{1}{k} \;=\; \frac{\mathcal{H}_{T-1}-\mathcal{H}_{s-1}}{T}.
\]

\paragraph{Harmonic identities.}
We record two classical identities, used repeatedly below:
\begin{align}
    \sum_{k=1}^n \mathcal{H}_k &\;=\; (n+1)\mathcal{H}_n - n, \label{eq:harm_id1}\\
    \sum_{k=1}^n \mathcal{H}_k^2 &\;=\; (n+1)\mathcal{H}_n^2 - (2n+1)\mathcal{H}_n + 2n. \label{eq:harm_id2}
\end{align}
Both are proved by writing $\mathcal{H}_k=\sum_{j\le k}1/j$ and exchanging the order of summation.

\paragraph{Closed form for $\kappa_\cau=\|\vw^\cau\|^2$.}
From $\vw^\cau=\vw^{(0)}$:
\begin{align*}
    T^2\kappa_\cau
    &= (1+\mathcal{H}_{T-1})^2 + \sum_{s=2}^T(\mathcal{H}_{T-1}-\mathcal{H}_{s-1})^2 \\
    &= (1+\mathcal{H}_{T-1})^2 + \sum_{j=0}^{T-2}(\mathcal{H}_{T-1}-\mathcal{H}_j)^2 \qquad (j=s-1)\\
    &= (1+\mathcal{H}_{T-1})^2 + (T-1)\mathcal{H}_{T-1}^2 - 2\mathcal{H}_{T-1}\sum_{j=0}^{T-2}\mathcal{H}_j + \sum_{j=0}^{T-2}\mathcal{H}_j^2.
\end{align*}
Applying \eqref{eq:harm_id1}--\eqref{eq:harm_id2} (and $\mathcal{H}_0=0$) to $\sum_{j=1}^{T-2}\mathcal{H}_j$ and $\sum_{j=1}^{T-2}\mathcal{H}_j^2$, we obtain after straightforward algebraic collapse
\begin{equation}\label{eq:kappa_cau_app}
    \kappa_\cau \;=\; \frac{2T-1+\mathcal{H}_{T-1}}{T^2}.
\end{equation}

\paragraph{Closed forms for the prefix mass and suffix square sum.}
For the prefix model ($\mR=\vone_L\vone_L^\top\oplus\mI_{T-L}$), define $\Delta_{L,T}:=\mathcal{H}_{T-1}-\mathcal{H}_{L-1}$. Then
\begin{align}
    \sum_{t=1}^L w_t^\cau
    &\;=\; \frac{1+\mathcal{H}_{T-1}}{T} + \frac{1}{T}\sum_{t=2}^L\big(\mathcal{H}_{T-1}-\mathcal{H}_{t-1}\big) \nonumber\\
    &\;=\; \frac{1+L\mathcal{H}_{T-1}-\sum_{t=1}^{L-1}\mathcal{H}_t}{T}
    \;=\; \frac{L\big(1+\Delta_{L,T}\big)}{T}, \label{eq:prefix_mass_app}
\end{align}
where the second equality uses \eqref{eq:harm_id1} in the form $\sum_{t=1}^{L-1}\mathcal{H}_t=L\mathcal{H}_{L-1}-(L-1)$. For the suffix square sum, expanding $(\mathcal{H}_{T-1}-\mathcal{H}_{s-1})^2$ and summing over $s=L+1,\ldots,T$:
\begin{align}
    T^2\sum_{t=L+1}^T(w_t^\cau)^2
    &\;=\; (T-L)\mathcal{H}_{T-1}^2 - 2\mathcal{H}_{T-1}\!\!\sum_{j=L}^{T-1}\mathcal{H}_j + \sum_{j=L}^{T-1}\mathcal{H}_j^2 \nonumber\\
    &\;=\; 2(T-L) + (1-2L)\Delta_{L,T} - L\Delta_{L,T}^2. \label{eq:suffix_sq_app}
\end{align}
Combining $\alpha_\cau = (\sum_{t=1}^L w_t^\cau)^2 + \sum_{t=L+1}^T (w_t^\cau)^2$ with \eqref{eq:prefix_mass_app}--\eqref{eq:suffix_sq_app} produces the closed form \eqref{eq:alpha_cau_full}:
\begin{equation}\label{eq:alpha_cau_full}
    \alpha_\cau \;=\; \frac{1}{T^2}\Big(L^2 + 2(T-L) + (2L(L-1)+1)\Delta_{L,T} + L(L-1)\Delta_{L,T}^2\Big).
\end{equation}

\paragraph{Comparison with mean pooling.}
For $\vw^\id=\vone/T$, $\kappa_\id=1/T$ and $\alpha_\id=(L^2+(T-L))/T^2$. A direct inspection of \eqref{eq:kappa_cau_app}--\eqref{eq:alpha_cau_full} shows that $\alpha_\cau/\kappa_\cau>\alpha_\id/\kappa_\id$ for all $1\le L<T$, with the ratio increasing roughly like $\log T$ when $L=1$---the regime where causal attention is most effective.

\section{Low-Rank-plus-Noise Structure of Pre-trained Embeddings}\label{app:gpt2}

This appendix empirically supports the modelling choice made in Section~\ref{sec:embed}, namely that the embedding table $\mE$ be written as the sum of a low-rank ``signal'' component and a noise term. We illustrate that the singular-value spectrum of a large pre-trained word-embedding matrix does exhibit this structure: a bulk density (noise) plus a small number of outliers (the signal).

\paragraph{Setup.}
We inspect the token-embedding matrix $\mE\in\sR^{V\times d}$ of GPT-2~\citep{radford2019language} with $V=50257$ and $d=768$. The matrix is centred column-wise and we compute the eigenvalues of the $d\times d$ Gram matrix $\mE^{\top}\mE/V$, whose empirical distribution matches the Silverstein law \citep{silverstein1995empirical} (i.e., generalization of the Marchenko-Pastur law for non-isotropic covariance profiles) up to a finite number of outlier eigenvalues corresponding to the rank of the ``signal'' component.

\begin{figure}[h]
    \centering
    \includegraphics[width=0.98\linewidth]{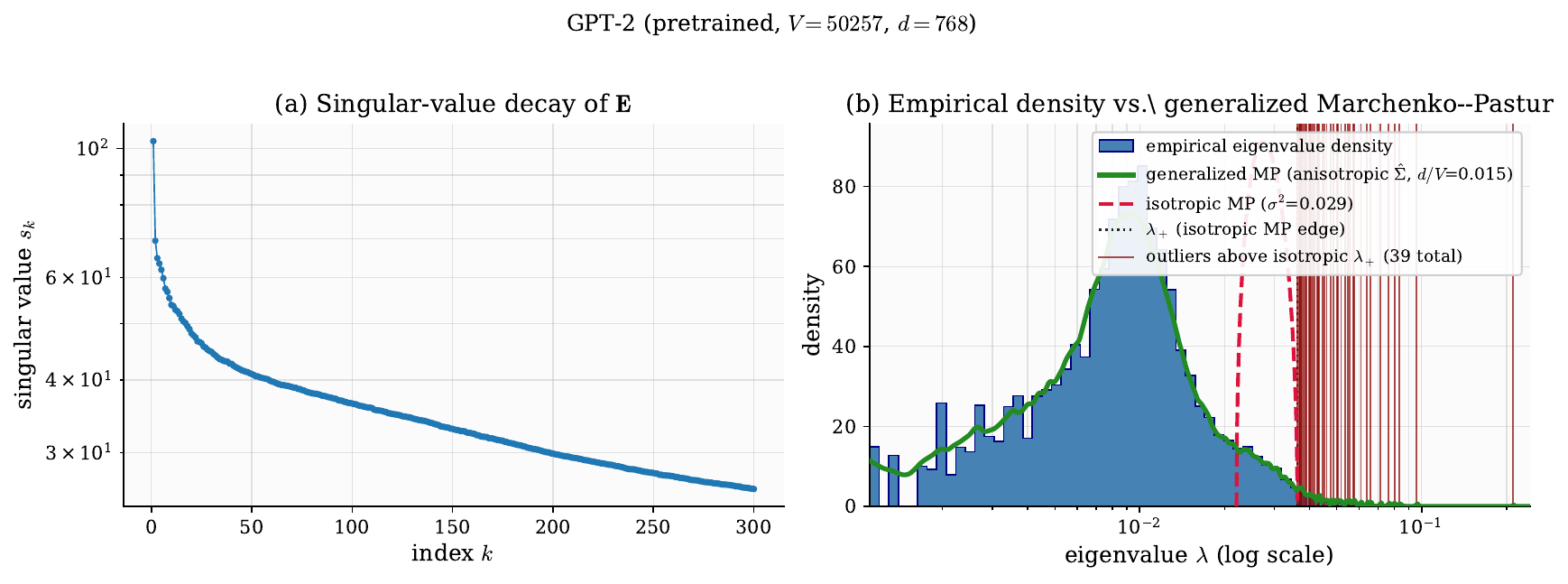}
    \caption{Empirical spectrum of the centred GPT-2 token-embedding matrix ($V=50257$, $d=768$). \textbf{(a)}~Singular-value decay $s_k$ of $\mE$ on a log scale: a handful of outliers ($\sim 10$--$30$) separates cleanly from a rapidly flattening bulk. \textbf{(b)}~Histogram of the bulk eigenvalues of $\mE^{\top}\mE/V$ together with the Silverstein density \citep{silverstein1995empirical} with covariance matrix estimated from the raw embedding table. The empirical bulk follows its theoretical counterpart closely; the residual outlier eigenvalues above the right bulk edge $\lambda_+$ are the spectral footprint of the low-rank signal.}
    \label{fig:gpt2_spectrum}
\end{figure}

\paragraph{Takeaway.}
The separation observed in Figure~\ref{fig:gpt2_spectrum}(a), together with the near-exact agreement between the empirical bulk and the theoretical prediction in Figure~\ref{fig:gpt2_spectrum}(b), is consistent with several prior empirical studies of pre-trained embeddings~\citep{mu2017all,ethayarajh2019contextual,gao2019representation}: the embedding matrix is well described by a ``top-$r$ plus isotropic noise'' model with $r \ll d$. Our analytical model~\eqref{eq:embedding} takes the minimal-rank instance ($r=1$) with isotropic covariance, which is the simplest setting that already yields a nontrivial signal-recovery phase diagram. The results in Remark~\ref{rem:universality} further imply that the Gaussianity of the noise component is inessential: the spectral conclusions of Theorems~\ref{thm:bulk}--\ref{thm:sample_overlap} extend to any noise distribution with a bounded fourth moment, matching the empirical generality of the structure observed above.

\section{Downstream Classification}\label{appendix:classif}

We provide here the settings of the downstream classification experiment reported in Figure~\ref{fig:classification} of the main paper. We consider the following parameters:
\begin{itemize}
    \item prefix correlation model $\mR = \vone_L\vone_L^\top \oplus \mI_{T-L}$,
    \item dimensions $d=300$, $V=500$, $N=800$, $T=10$, $L=3$,
    \item binary labels $y_n = \mathrm{sign}(\vone_L^\top \xi_n)\in\{\pm 1\}$,
    \item $80\%/20\%$ train/test split,
    \item L2 regularization coefficient $\lambda=1$,
    \item signal-strength sweep $\|\vmu\|\in[0,5]$.
\end{itemize}

The different considered pooling methods are:
\begin{enumerate}
    \item[(i)]   \textbf{Mean pooling:} $\vw^{\id} = \vone_T/T$.
    \item[(ii)]  \textbf{Causal attention:} $\vw^{\cau}$ is the
                 deterministic harmonic weight vector of Proposition~\ref{prop:harmonic}
                 (score-free limit of causal self-attention with
                 $\tau/d$ scaling).
    \item[(iii)] \textbf{Optimal $\vw^{\opt}$:} top eigenvector of $\mR$,
                 normalized so that $\vone^\top \vw^{\opt}=1$
                 (Theorem~\ref{thm:optimal}).
    \item[(iv)] \textbf{Learned $\vw$ (joint L2):} jointly minimize
    \begin{equation}\label{eq:joint_l2}
        \min_{\vw\in\Delta_T,\; \vbeta\in\sR^d}
        \;\frac{1}{N_\mathrm{tr}}\bigl\| \vy_\mathrm{tr}
            - \mC(\vw)\,\vbeta \bigr\|_2^2
        \;+\; \lambda\,\|\vbeta\|_2^2,
    \end{equation}
    where $\mC(\vw)\in\sR^{N_\mathrm{tr}\times d}$ has rows
    $\vc_n = \sum_{t=1}^T w_t X_t^{(n)}$ and
    $\Delta_T = \{\vw\in\sR_+^T : \vone^\top\vw = 1\}$ is the
    probability simplex.
\end{enumerate}

\paragraph{Parameterization and optimizer.}
To enforce $\vw\in\Delta_T$, we reparameterize $\vw = \softmax(\vphi)$ for
$\vphi\in\sR^T$. For any fixed $\vw$, the objective~\eqref{eq:joint_l2} is
a ridge regression in $\vbeta$ with closed-form solution
\[
    \vbeta^\star(\vw)
    = \bigl(\mC(\vw)^\top\mC(\vw) + \lambda N_\mathrm{tr}\,\mI_d\bigr)^{-1}
      \mC(\vw)^\top \vy_\mathrm{tr}.
\]
Substituting $\vbeta^\star(\vw)$ back yields a function
$g(\vphi) = g(\vw(\vphi))$ whose gradient with respect to $\vphi$ follows
from the envelope theorem applied at the ridge optimum:
\[
    \frac{\partial g}{\partial \mC}
    = -\frac{2}{N_\mathrm{tr}}\bigl(\vy_\mathrm{tr}
        - \mC\,\vbeta^\star\bigr)\,{\vbeta^\star}^\top,
    \qquad
    \frac{\partial g}{\partial w_t}
    = \sum_{n,j}
      \frac{\partial g}{\partial C_{n,j}}\, X_{t,j}^{(n)},
\]
which is then pushed through the softmax as
$\partial g/\partial \vphi = \vw\odot\bigl(\nabla_{\vw}g
- \langle \nabla_{\vw}g,\vw\rangle\bigr)$. The resulting
low-dimensional problem in $\vphi\in\sR^{T}$ is solved by L-BFGS-B with
multiple random restarts; the best-loss initialization is retained.